\definecolor{tableblue}{HTML}{AFC2FF}
\definecolor{tablelightblue}{HTML}{E1E8FF}
\definecolor{mydarkblue}{rgb}{0,0.08,0.45}
\definecolor{mydarkgreen}{RGB}{0, 139, 69}
\definecolor{mycyan}{cmyk}{.3,0,0,0}
\definecolor{dkgreen}{rgb}{0,0.6,0}
\definecolor{dkred}{rgb}{0.6,0,0}
\definecolor{dkblue}{rgb}{0,0,0.6}
\definecolor{purple}{rgb}{0.5,0,0.5}
\newcommand{\cdummy}{\cdot}
\newcommand{\mathd}{\mathrm{d}}
\newcommand{\tmmathbf}[1]{\ensuremath{\boldsymbol{#1}}}
\newcommand{\tmop}[1]{\ensuremath{\operatorname{#1}}}
\newcommand{\junz}[1]{{\color{blue}{[\textbf{jz}: #1]}}}
\theoremstyle{plain}
\newtheorem{theorem}{Theorem}[section]
\newtheorem{lemma}[theorem]{Lemma}
\theoremstyle{definition}
\newtheorem{definition}[theorem]{Definition}
\theoremstyle{remark}
\icmltitlerunning{DPOT: Auto-Regressive Denoising Operator Transformer for Large-Scale PDE Pre-Training}
\begin{document}

\twocolumn[
\icmltitle{DPOT: Auto-Regressive Denoising Operator Transformer \\for Large-Scale PDE Pre-Training}




\begin{icmlauthorlist}
\icmlauthor{Zhongkai Hao}{cs,ee}
\icmlauthor{Chang Su}{cs}
\icmlauthor{Songming Liu}{cs}
\icmlauthor{Julius Berner}{cl}
\icmlauthor{Chengyang Ying}{cs}
\icmlauthor{Hang Su}{cs}
\icmlauthor{Anima Anandkumar}{cl}
\icmlauthor{Jian Song}{ee}
\icmlauthor{Jun Zhu}{cs,re}
\end{icmlauthorlist}

\icmlaffiliation{cs}{Dept. of Comp. Sci. \& Techn., Institute for AI, BNRist Center, Tsinghua-Bosch Joint ML Center, Tsinghua University}
\icmlaffiliation{ee}{Dept. of EE, Tsinghua University}
\icmlaffiliation{re}{RealAI}
\icmlaffiliation{cl}{Caltech}

\icmlcorrespondingauthor{Jun Zhu}{dcszj@tsinghua.edu.cn}

\icmlkeywords{Machine Learning, ICML}

\vskip 0.3in
]



\printAffiliationsAndNotice{}  

\begin{abstract}
Pre-training has been investigated to improve the efficiency and performance of training neural operators in data-scarce settings. However, it is largely in its infancy due to the inherent complexity and diversity, such as long trajectories, multiple scales and varying dimensions of partial differential equations (PDEs) data. In this paper, we present a new auto-regressive denoising pre-training strategy, which allows for more stable and efficient pre-training on PDE data and generalizes to various downstream tasks. Moreover, by designing a flexible and scalable model architecture based on Fourier attention, we can easily scale up the model for large-scale pre-training.We train our PDE foundation model with up to 1B parameters on 10+ PDE datasets with more than 100k trajectories. Extensive experiments show that we achieve SOTA on these benchmarks and validate the strong generalizability of our model to significantly enhance performance on diverse downstream PDE tasks like 3D data.
Code is available at \url{https://github.com/thu-ml/DPOT}.

\end{abstract}
\vspace{-1ex}
\section{Introduction}
\label{submission}

Learning solution operators for Partial Differential Equations (PDEs) is a fundamental task in scientific machine learning. It learns infinite-dimensional mappings from input to solution function spaces using parametrized PDE datasets to generalize to unseen inputs \cite{zachmanoglou1986introduction, karniadakis2021physics,li2020fourier}. Serving as surrogate models, neural operators can be more than five orders of magnitude faster~\cite{pathak2022fourcastnet,azzizadenesheli2023neural} than traditional numerical solvers and show great potential in applications such as weather forecasting\cite{pathak2022fourcastnet}, electromagnetism \cite{augenstein2023neural}, fluid dynamics \cite{li2022fourier}, and structural optimization \cite{shukla2024deep}. To improve the performance on different operator learning tasks, much effort has been devoted to designing effective model architectures like DeepONet~\cite{lu2021learning}, Fourier neural operator (FNO)~\cite{li2020fourier,li2022fourier} and transformers~\cite{cao2021choose,li2022transformer,hao2023gnot}. 

\begin{figure}[t]
    \centering
    \includegraphics[width=0.43\textwidth]{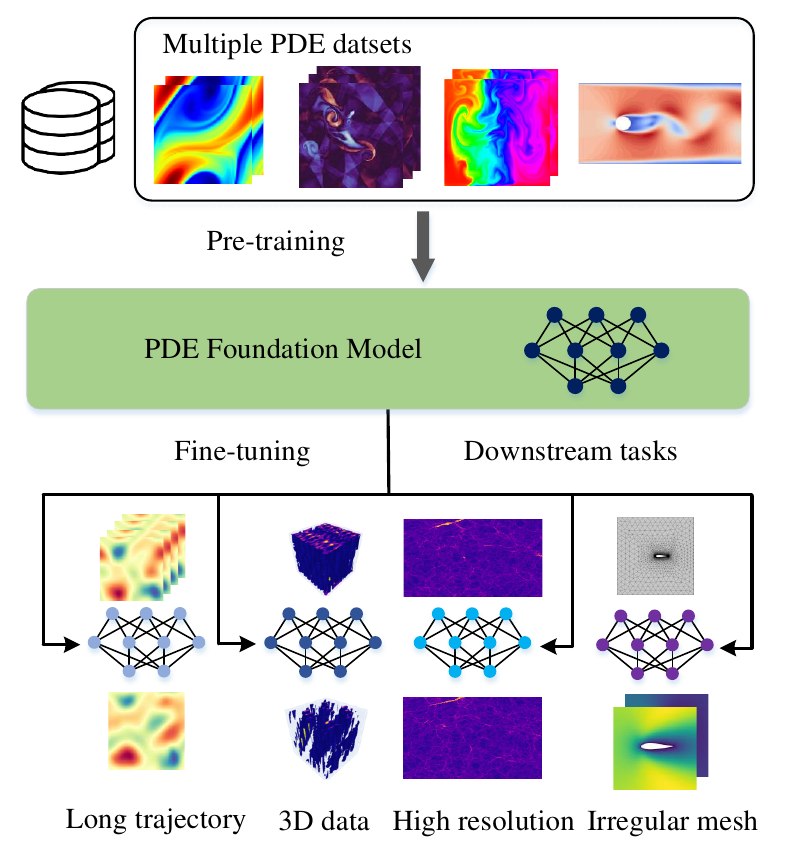}
    \vspace{-1em}
    \caption{An illustration of pre-training a PDE foundation model using massive data from multiple PDE datasets. The pre-trained model is then used for fine-tuning different downstream operator learning tasks, which can be complex. 
    (Best viewed in color)
    }
    \vspace{-1em}
    \label{fig0}
\end{figure}

A significant challenge for neural operators is their dependency on large datasets, typically obtained through costly simulations or experiments, which can limit their application in data-scarce problems. For instance, for an airfoil design problem, we usually need thousands of data samples to train a neural operator~\cite{shukla2024deep}, and each might take hours to days to generate. Large-scale pre-training has proven to be an effective paradigm with strong generalization ability to downstream tasks, in the data-rich fields like natural language processing (NLP) and computer vision (CV)~\cite{he2022masked,brown2020language}. 
However, large-scale pre-training of neural operators is still in its infancy. The gap from the fast development in NLP/CV is mainly due to the complexity and diversity of different PDE tasks, as highlighted in Fig.~\ref{fig0}. Firstly, different from images and text, PDE datasets exhibit significant variations in dimensions, number of temporal steps, resolutions, and geometric configurations. Secondly, different types and cases of PDEs have their own regularity and numerical ranges. This diversity necessitates the model and the pre-training strategy to be flexible to handle diverse inputs and scalable in its representational capacity to generalize to unseen data. 

Several works \cite{mialon2023self, mccabe2023multiple, subramanian2023towards} make initial attempts to pre-training PDE representations. However, these methods are not scalable and are limited to a narrow range of PDE tasks since their pre-training strategy and model architecture do not possess the required flexibility and capacity mentioned above. For instance, \citet{mialon2023self} uses contrastive learning, which relies on specially-designed data augmentation, while MPP \cite{mccabe2023multiple} pre-trains a vision transformer auto-regressively on PDEBench data~\cite{takamoto2022pdebench} and it might be unstable for long trajectories and cannot be transferred to data with diverse shapes.  \citet{subramanian2023towards} experimentally finds that data from different PDEs could be unified and mutually boost others for training models using operator frameworks but is limited to three simple PDE systems.

{\bf Our approach: }To resolve the challenges and push the limit of large-scale operator learning to diverse PDE tasks, we propose an auto-regressive Denoising Pre-training Operator Transformer (DPOT).
Firstly, we design an auto-regressive denoising pre-training strategy by injecting Gaussian noise into training data. By predicting the next timestep using noisy inputs, we show that the robustness and generalization ability greatly improve when transferring to downstream tasks. Secondly, to handle the complexity of PDE data, we introduce an efficient transformer backbone utilizing Fourier-based attention. Specifically, we apply a nonlinear learnable transform in frequency space, enablingefficient learning of kernel integral transforms for learning PDE solution maps. Finally, we collect massive PDE data, including more than 100k trajectories from more than ten datasets consisting of diverse PDEs, such as Navier-Stokes equations, diffusion-reaction equations, and shallow-water equations, with vastly different properties. We show that our DPOT can be easily scaled up from 7M to 0.5B (the largest currently available) as a foundational model for PDEs. We conduct extensive experiments on different downstream tasks to validate the generalization ability of the pre-trained model. Our contributions are summarized as:
\begin{itemize}[leftmargin=*]    
    \item We propose a new model architecture based on a Fourier transformer and pre-train it using a novel auto-regressive denoising strategy, enjoying excellent generalization and scaling ability. Based on this architecture and training scheme, we are able to pre-train the largest PDE foundational model with up to 0.5B parameters on more than 100k PDE data using 10+ datasets.
    \item  Based on our pre-trained model, we achieve state-of-the-art performance on multiple challenging benchmarks, such as PDEBench, PDEArena, and CFDBench, reducing error by up to 52\%. We also conduct massive experiments verifying the strong generalization and transfer ability on diverse downstream tasks out of training data, such as 3D PDE data, high-resolution data, and steady-state PDE.
\end{itemize}
\section{Related Work}
Here, we briefly summarize related work on neural operators
and pre-training in scientific machine learning.
\subsection{Neural Operators}
Neural operators learn solution operators of PDEs from extensive data, and they have shown immense potential in many fields such as fluid dynamics \cite{li2020fourier}, thermodynamics \cite{zhao2024recfno}, electromagnetism~\cite{augenstein2023neural} and climate forecasting \cite{pathak2022fourcastnet}. The diverse mathematical properties and formats of data across PDEs have spurred substantial research to design effective architectures of neural operators. For example, DeepONet \cite{lu2021learning} employs a branch network and a trunk network, while the Fourier Neural Operator (FNO) \cite{li2020fourier} introduces an efficient method by learning features in the frequency domain. Extensions of this method, such as Geo-FNO \cite{li2022fourier}, NUNO~\cite{liu2023nuno}, and GINO \cite{li2023geometry} adapt FNO to complex geometric shapes. Some work \cite{brandstetter2022message, michalowska2023neural} target time-dependent PDEs, focusing on next-time prediction challenges with specialized training strategies. PINO~\cite{li2021physics} and PI-DeepONet~\cite{wang2021learning} combine operator learning with PINNs~\cite{raissi2019physics,kharazmi2019variational,wang2021understanding,cuomo2022scientific} in training or fine-tuning that enhances the generalization ability.

Further progress has been made through new architectures utilizing transformers \cite{cao2021choose,li2022transformer,hao2023gnot,wu2023solving, ovadia2023vito}, integrating techniques like patchification and linear attention mechanism. For instance, the GK-Transformer \cite{cao2021choose}, OFormer \cite{li2022transformer}, and GNOT~\cite{hao2023gnot} have shown promise on problems with irregular geometries. AFNO~\cite{guibas2021adaptive} proposes efficient attention by applying FFT to the data and mixes features in the frequency domain inspired by FNO, which achieves low memory and computational cost similar to MLP-Mixer\cite{tolstikhin2021mlp}. It is then modified and deployed for large-scale climate forecasting \cite{pathak2022fourcastnet}.  Despite these significant efforts, they need to be individually trained for specific tasks and require much domain-specific data, highlighting the need to improve data efficiency in this field.

\subsection{Pre-training in Scientific Machine Learning}
Pre-training has proven to be a highly successful paradigm that enhances downstream tasks by training models in a (self-)supervised manner on large datasets. It not only achieves remarkable performance in traditional areas like natural language processing
 \cite{radford2018improving,radford2019language,brown2020language} and vision~\cite{he2020momentum,he2022masked}, but also shows potential in scientific machine learning like protein~\cite{jumper2021highly}, molecule~\cite{zhou2023uni}, climate and weather modeling~\cite{nguyen2023climax, mukkavilli2023ai,pathak2022fourcastnet}.

In the area of learning PDE data,  there have been initial attempts to explore pre-training across various physical systems. For example, \citet{mialon2023self} proposes contrastive learning on PDEs using symmetries. \citet{subramanian2023towards} design a comparatively universal PDE to train data from multiple steady-state PDEs collectively.~\citet{yang2023context} utilize the structure of MathGPT to explore in-context learning capabilities for PDE data.  Additionally, MPP~\cite{mccabe2023multiple} proposes an auto-regressive approach to pre-train on time-dependent PDE datasets. However, these explorations have so far been limited to specific types of equations and data. There remains much space to be explored in pre-training for more complex scenarios.

\section{Proposed Method}

\subsection{Overview of DPOT}
\begin{figure*}
    \centering
    \includegraphics[width=0.9\textwidth]{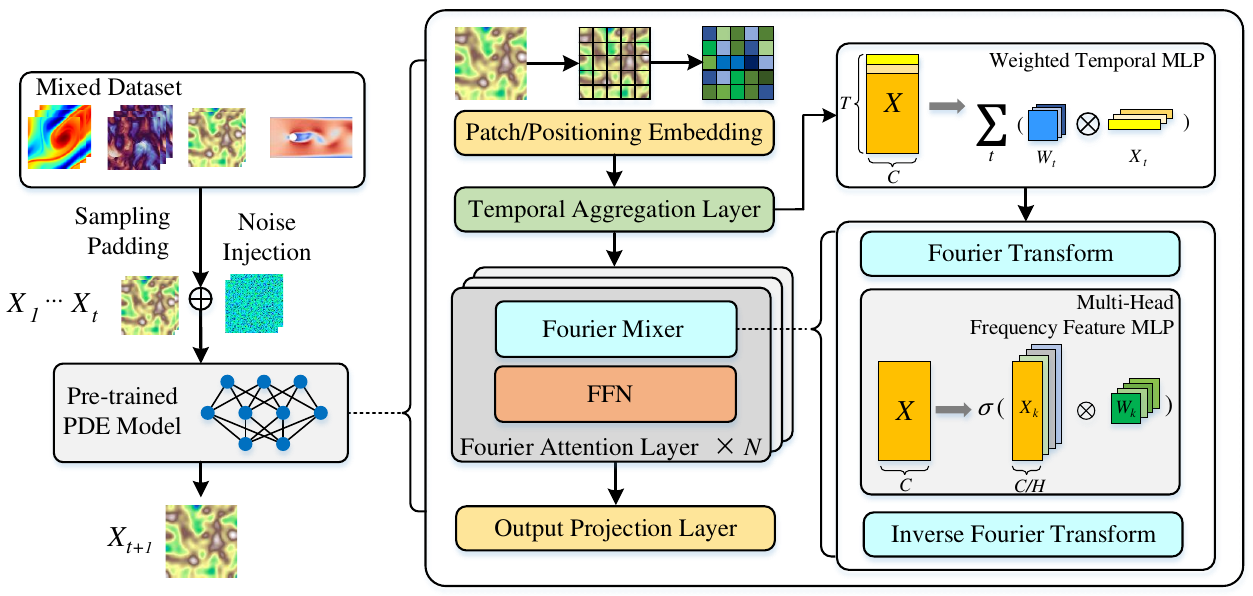}
    \vspace{-1em}
    \caption{An illustration of our model architecture. We first sample trajectories from mixed datasets of multiple PDEs. We optimize the model by predicting the next frame using noise-corrupted previous frames, which is also denoted as auto-regressive denoising training. We design a new model architecture consisting of a temporal aggregation layer and multiple Fourier attention layers. They can extract spatial-temporal features efficiently and can be easily scaled up to large models.
    (Best viewed in color)
    }
    \vspace{-1em}
    \label{fig1}
\end{figure*}

Here, we consider a general form of parametrized time-dependent PDEs with
variable $\tmmathbf{u} (x, t) \in \mathbb{R}^m$ defined on $(x, t) \in \Omega
\times T \subset \mathbb{R}^{d + 1}$ satisfying,
    \begin{eqnarray}
  \frac{\partial \tmmathbf{u}}{\partial t} - \mathcal{F}[\tmmathbf{u}; \theta] (x, t) &=& 0, (x,t) \in \Omega \times T  \\
  \tmmathbf{u} (x, 0)&=&\tmmathbf{u}^0 (x),\quad    x \in \Omega \nonumber\\
  \mathcal{B} [\tmmathbf{u}] (x, t)&=&0,\quad \quad \quad x \in \partial \Omega
  \nonumber
\end{eqnarray}
where $\mathcal{F}[\tmmathbf{u}, \theta](x, t) = F(t, x, \boldsymbol{u}, \partial_{x} \boldsymbol{u}, \partial_{x x}\boldsymbol{u},...;\theta)$ is a differential operator with spatial derivative terms, $\theta \in
\tmmathbf{\Theta}$ is an unknown parameter that determines the type and
coefficients of the PDE, $\tmmathbf{u}^0 (x)$ is the initial condition, and
$\mathcal{B} [\tmmathbf{u}] (x, t)$ denotes the boundary condition. Such initial-value problems are general to include multiple fundamental PDEs, for instance, Navier-Stokes
equations, diffusion-reaction equations, shallow-water equations, Burgers
equations, etc. 

In practice, we collect datasets containing data from different
PDEs $\mathcal{D}= \cup_{k = 1}^K \mathcal{D}_k$ where $\mathcal{D}_k = \{
\tmmathbf{u}_i \}_{1 \leqslant i \leqslant N_k}$. The solution functions
$\tmmathbf{u}_i \in \mathcal{D}$ are discretized on spatiotemporal meshes
$\tmmathbf{u}_i = (\tmmathbf{u}_i^1, \ldots, \tmmathbf{u}_i^T)$ and
$\tmmathbf{u}_i^t = \{ (x_j, u^t_j) : x_j \in \mathcal{X}_i \},  1 \leqslant t \leqslant T$. The meshes $\mathcal{X}_i$
could be a regular grid or irregular point clouds, depending on the geometry. The parameters $\theta$ decide the type and specific parameters of the PDE. However, in many practical applications like climate modeling and aerospace engineering, we only observe a trajectory of data but do not have access to the detailed parameters $\theta$ explicitly. In order to predict the next timesteps, we need to infer the most likely $\theta$ implicitly
from adjacent $T$ frames $(\tmmathbf{u}_i^1, \ldots, \tmmathbf{u}^T_i)$. We defer the details of datasets to Appendix \ref{dataset-details}.

We expect to extract suitable representations from trajectories of different PDEs $\mathcal{D}$ without the explicitly labeled $\theta$. There are several challenges to pre-training on these complex datasets. Firstly, these trajectories consist of data with uneven lengths on the temporal dimension. Traditional auto-regressive learning causes a distribution gap between training and testing~\cite{bengio2015scheduled, brandstetter2022message}. Here we introduce auto-regressive denoising pre-training by injecting noise into the data in Sec \ref{denoise}. We show that this regularizes the model and improves the robustness when testing. Secondly, these PDE datasets vary in resolution, shape, and the number of variables (channels). Such diversity and complexity necessitate the model to be flexible to input shapes and powerful in capacity. To achieve the goal, we present a preprocessing and sampling scheme in Sec \ref{preprocess}, and then we introduce a new transformer architecture in Sec \ref{model}. As we shall see, our model enjoys powerful expressivity and excellent scalability for learning from complex PDE data.

\subsection{Auto-regressive Denoising Pre-training}
\label{denoise}
To learn from temporal PDE datasets, we design a neural operator $\mathcal{G}_w
(\tmmathbf{u}^{t < T})$ parameterized by weights $w$ that auto-regressively
takes $T$ frames as input and decodes the next frame from previous frames, 
\begin{equation}
  \tmmathbf{u}^T =\mathcal{G}_w (\tmmathbf{u}^0, \ldots, \tmmathbf{u}^{T - 1})
  .
\end{equation}
By predicting the next frame, we learn an internal representation of PDEs.  However, directly supervising the one-step loss is not satisfactory due to the cumulative error when predicting a long trajectory at the testing stage, which is observed in both neural PDE solvers \cite{brandstetter2022message} and natural language processing \cite{bengio2015scheduled, zhang2019bridging}. Previous approaches like pushforward trick \cite{brandstetter2022message} increase the training complexity significantly, which is not suitable for pre-training.

Here we propose a simple yet effective strategy by injecting small-scale noise into the inputs. For $\forall t\leq T$, denote $\tmmathbf{u}^{<t}$ as $(\tmmathbf{u}^0, \ldots, \tmmathbf{u}^{t-1})$ and the noise as $\tmmathbf{\varepsilon} \sim \mathcal{N}(0, \epsilon||\tmmathbf{u}^{<t}||I)$. Then our goal is to predict the one-step transition between samples from different
datasets,
\begin{equation}
  \min_w \mathcal{L}= 
  \mathbb{E}_{\boldsymbol{u} \sim p(\mathcal{D})}  \sum_{1 \leqslant t\leqslant T} \| \mathcal{G}_w (\tmmathbf{u}^{<t} + \boldsymbol{\varepsilon}) -\tmmathbf{u}^{t} \|_2^2 ,
\end{equation}
where $p(\cdot)$ determines a sampling strategy on $\mathcal{D}$ which will be stated in Sec \ref{preprocess}.
By predicting the next timestep data from previous frames, the model learns to implicitly infer the details of the PDE, i.e., the parameters $\theta$, and propagate the solution to the next timestep. Experimentally we show that noise injection enhances the robustness and reduces the gap between training and inference.

\subsection{Data Preprocessing and Sampling}
\label{preprocess}
\textbf{Data Padding and Masking}.  Different PDE data vary in resolutions, numbers of variables, and geometric shapes. If we directly sample from the raw data, we obtain a batch with vastly different sizes, which leads to an unbalanced training load and decreases efficiency for modern multi-GPU training. Here, we propose a simple yet effective strategy to preprocess data in order to unify the shape of the data. Firstly, we choose a fixed resolution $H=128$, which matches a considerable part of the data. We upscale datasets with lower resolutions to $H$ through interpolation. For data with higher resolutions, we employ random sampling or interpolation to downscale to $H$. Secondly, to unify the number of variables (i.e., channels) across different PDEs, we pad all datasets (e.g., fill all entries with 1) along the channel dimension to match the dataset with the maximum number of channels. For datasets with irregular geometric shapes, we introduce an additional mask channel encoding the specific geometric configuration of each PDE instance.

\textbf{Balanced Data Sampling.} When training with multiple PDE datasets, the disparity between these datasets can lead to imbalanced training progress and low training efficiency. To address this issue, we propose to sample data by balancing the probability between datasets. We aim to balance the representation of each dataset within the training process. Let $|\mathcal{D}_k|$ be the number of data for the $k$-th dataset and $1\leqslant k \leqslant K$, we assign a weight $w_k$ for each dataset as an importance score for different datasets or PDEs. Then the probability of sampling from data from the $k$-th dataset is,
\begin{equation}
    p_k = \frac{w_k}{K |\mathcal{D}_k| \cdot \sum_k w_k}.
\end{equation}
We see that the probability of sampling from dataset $\mathcal{D}_k$ depends on the weight $w_k$ rather than its size $|\mathcal{D}_k|$. This alleviates the gradient imbalance caused by different dataset sizes. Detailed choice of $w_k$ is specified in Appendix \ref{train-details}.

\subsection{Model Architecture}
\label{model}

\textbf{Overview}.  Traditional transformers are inefficient for representing kernel transforms for diverse and high dimensional PDE data\cite{guibas2021adaptive}. Inspired by the ability of learning in spectral space of AFNO \cite{guibas2021adaptive}, we propose a new architecture based on Fourier attention as shown in Figure \ref{fig1}. First, we process the raw data through a patchifying layer and 

a temporal aggregation layer to reduce resolution and extract temporal dynamics inherent in PDEs. Then, we introduce a novel attention layer based on the Fourier mixer \cite{guibas2021adaptive}. It is capable of learning complex integral transformations within the frequency domain space, enabling the extraction of complex dynamic patterns. The proposed network enjoys strong expressivity and can easily be scaled up by increasing its depth and width for pre-training.

\textbf{Input Encoding and Temporal Aggregation}.
Suppose the input $\tmmathbf{u}^{< T} \in \mathbb{R}^{H \times W \times T
\times C}$ is a 2+1 dimensional spatiotemporal data with $C$ channels. We first encode the data with a patchification layer with positioning embedding similar to vision transformers\cite{dosovitskiy2020image},
\begin{equation}
  Z_p^t =\mathcal{P} (\tmmathbf{u}^t +\tmmathbf{p}^t), t = 1 \ldots T
\end{equation}
where $\mathcal{P}$ is a convolutional layer, $p_{i, j}^t = W_p (x_i, y_j, t)$ is a learnable positional encodings and $Z^t_p \in \mathbb{R}^{H\times W \times C}, W_p \in \mathbb{R}^{n \times 3}$. After that, we apply a novel temporal
aggregation layer to extract the information of the PDE from adjacent
timesteps. Specifically, for every node feature $\tmmathbf{z}^t_p \in \mathbb{R}^C$ in $Z^t_p = [\tmmathbf{z}^t_p]_{i, j}$,
we use a learnable transform $W_t$ with Fourier features $\tmmathbf{\gamma}
\in \mathbb{R}^C$,
\begin{equation}
  \tmmathbf{z}_{\tmop{agg}} = \sum_t W_t \cdummy \tmmathbf{z}_p^t e^{-
  i\tmmathbf{\gamma}t} .
\end{equation}
By aggregating the embeddings at different timesteps, we are able to extract
the information for inferring the type and parameters of PDE implicitly. However, due to
the highly complicated spatial dependency of PDE, we need to design an expressive layer to extract the spatial features.

\textbf{Fourier Attention Layer}. Here, we present our Fourier attention layer, which is the core layer for learning complex kernel integral transform to approximate the PDE solutions. Suppose $z^l
(x)$ is the features at $l$-th layer and $Z^l$ is the discretization of
feature $z^l (x)$. We apply the following kernel integral transform
$\mathcal{K}_{\phi}$ parameterized by a neural network to the features,
\begin{equation}
  (\mathcal{K}_{\phi} z^l) (x) = \int_{\Omega} \kappa (x, y ; \phi) z^l (y)
  \mathd y,
\end{equation}
where $\kappa (x, y ; \phi)$ is a neural network parameterized by parameters
$\phi$. However, the general form of kernel integral transform, like standard
attention takes quadratic complexity with respect to the spatial resolution
which is computationally expensive. To address this challenge, while
maintaining the strong expressivity of kernel integral transform, we adopt
to use a translation-invariant kernel $\kappa (x, y ; \phi) = \kappa (x - y ;
\phi)$. This reduced to a global convolution, which could be learned in Fourier
space,
\begin{equation}
  (\mathcal{K}_{\phi} z^l) (x) =\mathcal{F}^{- 1} [R_{\phi} \cdummy
  \mathcal{F} [z^l]] .
\end{equation}
where $z^l (x) \in \mathbb{R}^{d_z}$, and $R_{\phi} (k) \in \mathbb{C}^{d_z
\times d_z}$ is a function about frequency number. If we keep $m$ discretized
Fourier modes in practice, then we have $R_{\phi} \in \mathbb{C}^{m \times d_z
{\times d_z} } .$ This consumes a large amount of memory since we might keep
hundreds to thousands modes $m$ for high dimensional data. To improve the
utilization efficiency of parameters, we utilize the weight-sharing strategy
and approximate the frequency domain transformation using a shared MLP,
\begin{equation}
  \hat{z} (k) = W_2 \cdummy \sigma (W_1 \cdummy \mathcal{F} [z^l] (k) + b_1) +
  b_2 .
\end{equation}
Here $W_1, W_2 \in \mathbb{R}^{d_z \times d_z}, b_1, b_2 \in \mathbb{R}^{d_v}$
are learnable parameters, $\sigma (\cdummy)$ is an activation function. In
summary, we update the hidden representation $z^l (x)$ in a Fourier attention
layer using the following equation,
\begin{equation}
  z^{l + 1} (x) =\mathcal{F}^{- 1} [W_2 \cdummy \sigma (W_1 \cdummy
  \mathcal{F} [z^l] + b_1) + b_2] (x) .
\end{equation}
After that, we apply a group normalization layer\cite{wu2018group} for
better convergence speed. Then, we apply a feed-forward neural network
to the features to extract complex
dependencies at the channel dimension.

\textbf{Multi-head Structure.} To jointly attend to information from different representation subspaces.
We use a multi-head structure for the Fourier mixer. By splitting the whole features into multiple groups and learning them in different subspaces, we first divide spatial features $z^l (k) \in
\mathbb{R}^{d_z}$ into $h$ groups, i.e., $ z^l = \tmop{Concat} (z^l_1, z^l_2, \ldots z^l_h)$.

where $z^l_i (k) \in \mathbb{R}^{\frac{d_z}{h}}$ is the feature vector in each
block. Instead of applying a MLP to $z^l$, we use $h$ smaller MLPs to
transform each $z^l_i$ as follows,
\begin{equation}
  z^{l + 1}_i (x) =\mathcal{F}^{- 1} [W_{2, i} \cdummy \sigma (W_{1, i}
  \cdummy \mathcal{F} [z^l_i] + b_{1, i}) + b_{2, i}] (x),
\end{equation}
where $W_{1, i}, W_{2, i} \in \mathbb{R}^{d_z / h \times d_z / h}$ and $b_{1,
i}, b_{2, i} \in \mathbb{R}^{d_z / h}$. We see that the computation of
different MLPs could be fully parallelized which is efficient. The total
computation of the cost of multi-head Fourier attention will be $O (d_z^2 / h)$.

\subsection{Analysis of Our Model}

\textbf{Connections with AFNO.} Though our model draws inspiration from AFNO, there are many differences and improvements designed for multiple PDE pre-training. Firstly, we propose a time aggregation layer to extract PDE properties from adjacent frames. Secondly, we do not impose sparsity for token mixer (attention) layers, while AFNO uses soft-thresholding operations. Some PDEs could exhibit complex multi-scale features, and imposing sparsity in frequency domains might limit the expressivity of the model. A detailed comparison is in Appendix~\ref{compare-afno}.

\textbf{Theoretical Analysis.}
We present a theoretical analysis to show the expressivity of our model architecture. Specifically, we prove the following universal approximation theorem for Fourier attention layers, and the proof is deferred to Appendix~\ref{proof-thm}.
\begin{theorem}[Universal Approximation by Fourier attention layers]
    Let $s, s' > 0$; $\mathbb T^d = [0, 2\pi]^d$ be the d-dimensional torus;  $\mathcal G: H^s (\mathbb T^d; \mathbb R^{d_{in}} ) \to H^{s'} (\mathbb T^d; \mathbb R^{d_{out}})$ be a continuous operator between Sobolev spaces; and $K \subset H^s  (\mathbb T^d; \mathbb R^{d_{in}} )$ be a compact subset. Then, for any $\varepsilon >0$, there exists Fourier attention layers $\mathcal N: H^s (\mathbb T^d; \mathbb R^{d_{in}} ) \to H^{s'} (\mathbb T^d; \mathbb R^{d_{out}})$ satisfying:
\begin{equation}
    \sup _{v\in K} \|\mathcal G(v) -\mathcal N(v) \|_{L^2} \le \varepsilon .
\end{equation}
\end{theorem}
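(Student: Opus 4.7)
The plan is to adapt the standard universal approximation argument for Fourier neural operators (in the style of Kovachki et al.) to the Fourier attention architecture. The proof proceeds in three phases: first, reduce the infinite-dimensional approximation of $\mathcal{G}$ to a finite-dimensional problem via Fourier truncation; second, approximate the resulting finite-dimensional continuous map by a standard multilayer perceptron using the classical Cybenko/Hornik theorem; third, embed that MLP inside the Fourier attention architecture.

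For the reduction to finite dimensions, since $K$ is compact in $H^s(\mathbb{T}^d;\mathbb{R}^{d_{in}})$, for any $\delta > 0$ I would pick $N \in \mathbb{N}$ large enough that the truncation projector $P_N$ (zeroing all Fourier modes with $|k|>N$) satisfies $\sup_{v\in K}\|v - P_N v\|_{H^s} \le \delta$; this follows from the standard fact that Fourier partial sums converge uniformly on compact subsets of $H^s$. Since $\mathcal{G}(K)$ is compact in $H^{s'}$, I would similarly pick $N'$ with $\sup_{v\in K}\|\mathcal{G}(v)-P_{N'}\mathcal{G}(v)\|_{L^2} \le \varepsilon/3$, and uniform continuity of $\mathcal{G}$ on $K$ lets me replace $v$ by $P_N v$ at a further cost of $\varepsilon/3$. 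What remains is to approximate the continuous finite-dimensional map $\Phi$ sending the finite vector of Fourier coefficients of $P_N v$ to those of $P_{N'}\mathcal{G}(v)$, defined on a compact subset of $\mathbb{C}^{M_{\mathrm{in}}}$ with values in $\mathbb{C}^{M_{\mathrm{out}}}$.

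For the MLP step, Cybenko/Hornik yields a feedforward network $\tilde\Phi$ that uniformly approximates $\Phi$ on this compact set within $\varepsilon/3$. The final phase is to realize $\tilde\Phi$ as a composition of Fourier attention blocks. The primitives available per block are a forward FFT and inverse FFT, a channel-wise MLP in the frequency domain whose weights are shared across modes, and the channel-mixing feed-forward network applied pointwise in physical space between blocks. By sandwiching the physical-space MLP between FFT and inverse FFT, pointwise-in-$x$ operations translate into nontrivial linear combinations across Fourier modes, and the shared Fourier-space MLP supplies the required channel-wise nonlinearities; a standard depth-width argument then shows this composition can realize $\tilde\Phi$ to arbitrary $L^2$-accuracy.

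The main obstacle is the weight-sharing constraint of the Fourier mixer: the same $(W_1,W_2,b_1,b_2)$ acts on every frequency, so a single Fourier attention layer cannot by itself apply distinct transformations to distinct modes. I would overcome this by exploiting the learnable positional encoding $\mathbf{p}^t$ added before patchification together with the physical-space feed-forward block. The positional encoding breaks translation invariance at the input, and the physical-space channel MLP, when surrounded by Fourier transforms, acts as a mode-dependent mixer. The remaining bookkeeping is to combine the three $\varepsilon/3$ error contributions via the triangle inequality and to control aliasing in the discrete Fourier transform at a resolution depending on $N,N'$; this last piece is harmless because $s>0$ gives a uniform decay rate on the truncated high-frequency tail.
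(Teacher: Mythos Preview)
Your outline follows the standard FNO-style argument of Kovachki et al., whereas the paper takes a quite different route via the \emph{Sumformer} universal approximation theorem of Alberti et al.\ (2023). The paper works on a fixed $N$-point mesh, builds an auxiliary permutation-equivariant proxy target by appending the grid index to each token, invokes Sumformer universality to get an approximant of the form $[\psi(x_1,\Sigma),\dots,\psi(x_N,\Sigma)]$ with $\Sigma=\sum_{k}\phi(x_k)$, and then shows explicitly that two Fourier attention layers realise any such Sumformer. The decisive construction is the choice of positional encoding $P(j)=\mathcal F^{-1}(\{1,\dots,N\})(j)$: after the forward DFT this channel becomes the frequency index $k$ itself, so the \emph{shared} frequency-domain MLP $K_1$ can branch on $k$ and zero out every mode except the DC one, which equals $\tfrac{1}{\sqrt N}\Sigma$. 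An inverse DFT then broadcasts $\Sigma$ to every spatial site, after which pointwise MLPs supply $\phi$ and $\psi$. Your Fourier-truncation reduction has the advantage of connecting more directly to the Sobolev-space statement; the paper's Sumformer route buys an explicit, short construction at the cost of working purely on the discretised mesh.

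Where your proposal has a genuine gap is phase three. You correctly isolate the weight-sharing constraint as the obstacle and correctly point to the positional encoding as the remedy, but the resolution you offer --- ``the physical-space MLP, when surrounded by Fourier transforms, acts as a mode-dependent mixer'' and ``a standard depth-width argument then shows this composition can realize $\tilde\Phi$'' --- is precisely the nontrivial part, and it is not standard. A pointwise nonlinearity in physical space conjugated by the DFT is a nonlinear \emph{convolution} in frequency; it couples modes, but that is very far from giving you an arbitrary continuous map on the full coefficient vector, and no off-the-shelf depth-width theorem covers this constrained composition. The paper avoids the issue entirely by never trying to build an arbitrary mode-wise map: a Sumformer only needs the global sum (a single mode) plus pointwise maps, and the positional-encoding-to-frequency-index trick turns the shared mixer into a mode-aware gate that can isolate that one mode. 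To close your argument you would need a comparably explicit mechanism --- the paper's construction tells you what it should be: encode the frequency label into a channel so the shared frequency-space MLP can read it and thereby act mode-dependently despite its tied weights.
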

\vspace{-1ex}
\section{Experiments}
In this section, we first present the experimental setup. Then, we show the ability of our model to learn multiple PDEs and transfer them to diverse downstream tasks using extensive experiments.
Finally, we conduct ablation experiments on our model hyperparameters.
\subsection{Setup.}
\textbf{Datasets.} For the pre-training stage, we use 12 datasets from 4 different data sources, i.e., FNO~\cite{li2020fourier}, PDEBench~\cite{takamoto2022pdebench}, PDEArena~\cite{gupta2022towards}, and CFDBench~\cite{luo2023cfdbench}. The datasets employed in our study encompass diverse PDE types and parameters like Navier-Stokes equations, diffusion-reaction equations, and shallow-water equations. 

\textbf{Training and Evaluation.} 
 The model sizes are specified in Appendix \ref{train-details}. For models of all scales, we utilized the AdamW optimizer with a learning rate of $1 \times 10^{-3}$ and trained the models for 1000 epochs.
Our model was trained on servers equipped with eight A800 GPUs, each with 80 GB memory. Unless otherwise stated, we use $l_2$ relative error (L2RE) to measure the quality of the prediction, which is commonly used and can be referred to~\cite{li2020fourier}.

\begin{table*}[t]
\centering
\tiny
\begin{tabular}{cc|cccccccccccccc}
\hline
L2RE & Params & \multicolumn{3}{c|}{FNO-$\nu$} & \multicolumn{8}{c|}{PDEBench CNS-$(\eta, \zeta)$,DR,SWE} & \multicolumn{2}{c|}{PDEArena} & CFDBench \\
Subset & -- & 1e-5 & 1e-4 & \multicolumn{1}{c|}{1e-3} & 1,0.1 & 1,0.01 & M1 & 0.1,0.1 & 0.1,0.01 & M0.1 & DR & \multicolumn{1}{c|}{SWE} & NS & \multicolumn{1}{c|}{NS-cond} & -- \\ \hline
\multicolumn{2}{c|}{Small Model} &  &  &  &  &  &  &  &  &  &  &  &  &  &  \\
FNO & 0.5M & 0.156 & 0.0834 & 0.0128 & 0.098 & 0.096 & 0.097 & 0.360 & 0.170 & 0.265 & 0.12 & 0.0044 & 0.0912 & 0.319 & 0.00761 \\
UNet & 25M & 0.198 & 0.119 & 0.0245 & 0.334 & 0.291 & 0.313 & 0.569 & 0.357 & 0.463 & 0.0971 & 0.0521 & 0.102 & 0.337 & 0.209 \\
FFNO & 1.3M & 0.121 & 0.0503 & 0.0099 & 0.0212 & 0.052 & 0.0366 & 0.162 & 0.0452 & 0.104 & 0.0571 & 0.0116 & \textbf{0.0839} & 0.602 & 0.00714 \\
GK-T & 1.6M & 0.134 & 0.0792 & 0.0098 & 0.0341 & 0.0377 & 0.0359 & 0.0274 & 0.0366 & 0.0320 & 0.0359 & 0.00692 & 0.0952 & 0.423 & 0.0105 \\
GNOT & 1.8M & 0.157 & 0.0443 & 0.0125 & 0.0325 & 0.0420 & 0.0373 & 0.0228 & 0.0341 & 0.0285 & 0.0311 & 0.00678 & 0.172 & 0.325 & 0.00877 \\
Oformer & 1.9M & 0.1705 & 0.0645 & 0.0104 & 0.0417 & 0.0625 & 0.0521 & 0.0254 & 0.0205 & 0.0229 & 0.0192 & 0.00717 & 0.135 & 0.332 & 0.0102 \\
FNO-m & 7M & 0.116 & 0.0922 & 0.0156 & 0.151 & 0.108 & 0.130 & 0.230 & 0.076 & 0.153 & 0.0321 & 0.00912 & 0.210 & 0.384 & 0.0274 \\
MPP-Ti & 7M & -- & -- & -- & -- & -- & 0.0442 & -- & -- & 0.0312 & 0.0168 & 0.0066 & -- & -- & -- \\
MPP-S & 30M & -- & -- & -- & -- & -- & 0.0319 & -- & -- & 0.0213 & \textbf{0.0112} & \textbf{0.0024} & -- & -- & -- \\
\textcolor{dkred}{Ours-Ti} & 7M & 0.0976 & 0.0606 & \textbf{0.00954} & 0.0173 & 0.0397 & 0.0285 & 0.0132 & 0.0220 & 0.0176 & 0.0321 & 0.00560 & 0.125 & 0.384 & 0.00952 \\
\textcolor{dkred}{Ours-S} & 30M & \textbf{0.0553} & \textbf{0.0442} & 0.0131 & \textbf{0.0153} & \textbf{0.0337} & \textbf{0.0245} & \textbf{0.0119} & \textbf{0.0187} & \textbf{0.0153} & 0.0379 & 0.00657 & 0.0991 & \textbf{0.316} & \textbf{0.00696} \\ \hline
\multicolumn{2}{c|}{Pre-trained} &  &  &  &  &  &  &  &  &  &  &  &  &  &  \\
MPP-L & 400M & -- & -- & -- & -- & -- & 0.0208 & -- & -- & 0.0147 & \textbf{0.0098} & 0.00220 & -- & -- & -- \\
\textcolor{dkred}{Ours-S} & 30M & 0.0553 & 0.0442 & 0.0131 & 0.0153 & 0.0337 & 0.0245 & 0.0118 & 0.0188 & 0.0153 & 0.0379 & 0.00657 & 0.0999 & 0.316 & 0.00696 \\
\textcolor{dkred}{Ours-M} & 122M & 0.0409 & 0.0285 & 0.00474 & 0.0116 & 0.0238 & 0.0177 & \textbf{0.00866} & 0.0129 & 0.0108 & 0.0292 & 0.00290 & 0.0812 & 0.276 & 0.00752 \\
\textcolor{dkred}{Ours-L} & 500M & 0.0550 & 0.0274 & 0.00528 & 0.0100 & 0.0216 & 0.0158 & 0.00872 & 0.0115 & 0.0101 & 0.0232 & 0.00233 &0.0798 & 0.240 & \textbf{0.00650} \\ 
\textcolor{dkred}{Ours-H} & 1.03B & \cellcolor{tableblue}\textbf{0.0174} & \textbf{0.0131} & \cellcolor{tableblue}\textbf{0.00229} & \cellcolor{tableblue}\textbf{0.00961} & \textbf{0.0180} & \textbf{0.0138} & \cellcolor{tableblue}\textbf{0.00847} & \textbf{0.0105} & \cellcolor{tableblue}\textbf{0.00948} & 0.0191 & \textbf{0.00199} & \textbf{0.0379} & \textbf{0.213} & 0.00749 \\
\hline
\multicolumn{2}{c|}{DPOT-FT} &  &  &  &  &  &  &  &  &  &  &  &  &  &  \\
\textcolor{dkred}{T-200} & 7M & 0.0511 & 0.0431 & 0.00729 & 0.0136 & 0.0238 & 0.0187 & 0.0168 & 0.0145 & 0.0157 & 0.0194 & 0.00280 & 0.103 & 0.313 & 0.00537 \\
\textcolor{dkred}{S-200} & 30M & 0.0449 & 0.0342 & 0.00680 & 0.0152 & 0.0211 & 0.0182 & \textbf{0.0150} & 0.0151 & 0.0151 & 0.0171 & 0.00224 & 0.0892 & 0.290 & 0.00442 \\
\textcolor{dkred}{M-200} & 100M & 0.0255 & 0.0144 & 0.00427 & 0.0123 & 0.0179 & 0.0151 & 0.0182 & 0.0117 & 0.0149 & 0.0142 & 0.00218 & 0.0329 & 0.191 & 0.00452 \\
\textcolor{dkred}{L-200} & 500M & 0.0235 & 0.0117 & 0.00383 & 0.0114 & 0.0153 & 0.0133 & 0.0171 & 0.0108 & 0.0140 & 0.0158 & 0.00197 & 0.0307 & 0.182 & 0.00480 \\
\textcolor{dkred}{T-500} & 7M & 0.0520 & 0.0367 & 0.00580 & 0.0112 & 0.0195 & 0.0153 & 0.0174 & 0.0138 & 0.0156 & 0.0148 & 0.00241 & 0.0910 & 0.280 & 0.00391 \\
\textcolor{dkred}{S-500} & 30M & 0.0322 & 0.0237 & 0.00437 & 0.0129 & 0.0167 & 0.0148 & 0.0152 & 0.0126 & 0.0139 & 0.0129 & 0.00235 & 0.0867 & 0.268 & 0.00382 \\
\textcolor{dkred}{M-500} & 100M & 0.0229 & 0.0126 & 0.00335 & \textbf{0.00998} & 0.0146 & 0.0123 & 0.0161 & 0.00947 & 0.0128 & 0.0103 & 0.00227 & 0.0294 & 0.172 & 0.00373 \\
\textcolor{dkred}{L-500} & 500M & \textbf{0.0213} & \cellcolor{tableblue}\textbf{0.0104} & \textbf{0.00323} & 0.0108 & \cellcolor{tableblue}\textbf{0.0131} & \cellcolor{tableblue}\textbf{0.0119} & 0.0160 & \cellcolor{tableblue}\textbf{0.00905} & 
\textbf{0.0125} & 
\cellcolor{tableblue}\textbf{0.00739} & \cellcolor{tableblue}\textbf{0.00170} & \cellcolor{tableblue}\textbf{0.0278} & \cellcolor{tableblue}\textbf{0.170} & \cellcolor{tableblue}\textbf{0.00322} \\ \hline
\end{tabular}
\label{tb-main}
\caption{Results of main experiments are divided into three parts. Lower L2RE means better performance and we \textbf{bold} the best results in each part. We highlight the globally best results using \colorbox{tableblue}{blue} and the results of our DPOT using \textcolor{dkred}{darkred}. The first two parts are zero-shot results and the last part shows results for fine-tuning on each subset using our DPOT.}
\vspace{-1ex}
\end{table*}

\subsection{Baselines}
We selected the following highly influential methods as baselines for comparison. Except for the MPP-AViT/FNO-m are baselines that pre-trains and evaluated on all datasets according to \cite{mccabe2023multiple}, the errors of other methods are trained and evaluated on each sub-dataset individually.
\textbf{(Geo-)FNO}~\cite{li2020fourier,li2022fourier}: FNO represents an efficient framework for operator learning in the frequency domain. Geo-FNO extends its application to irregular grid datasets. For simplicity, we refer to both as FNO in the following discussions.
\textbf{UNet}~\cite{ronneberger2015u}: It is a classic network architecture designed for image-to-image mapping used in image segmentation and diffusion models.
\textbf{FFNO}~\cite{tran2021factorized}: It is an improvement over FNO by utilizing a separable Fourier representation. It reduces model complexity and facilitates deeper network structures.
\textbf{GK-Transformer/OFormer/GNOT}~\cite{cao2021choose,li2022transformer,hao2023gnot}: These are several transformer architectures based on linear attention for operator learning. They have different designs of their encoder/decoder and attention mechanisms.
\textbf{MPP-AViT/FNO-m}~\cite{mccabe2023multiple}: MPP-AViT involves auto-regressive pre-training on all datasets using a larger parameter AViT. For comparison, we train a larger parameter FNO (denoted as FNO-m), and evaluate its performance.

\subsection{Main Results}
Table \ref{tb-main} presents the results of our main experiments, i.e., the L2RE on the test sets of datasets in pre-training. The parameter in the second row means the parameter for the PDE dataset, e.g. $1e-5$ means the viscosity $\nu=1e-5$ in FNO dataset \cite{li2020fourier} and $1, 0.1$ means $(\eta, \zeta) = (1, 0.1)$ in PDEBench \cite{takamoto2022pdebench}.

The first part of the table shows the results of pre-trained and non-pre-trained small models ($\leqslant30$M). We see that our model successfully learns from multiple PDE datasets simultaneously using pre-training. We achieve state-of-the-art performance on 9 out of 12 datasets with an average improvement up to 52\% ($0.116$ to $0.0553$ on FNO-$1e-4$).
This indicates that our auto-regressive denoising pre-training strategy is effective for highly complex and heterogeneous PDE data. Specifically, our model outperforms most datasets with roughly equivalent model parameters. Compared to MPP, our model supports more datasets
and performs the best on most datasets except for two small subsets, i.e., PDB-SWE and PDB-DR. This demonstrates that given similar model sizes, the expressive power of our model's structure is more powerful in operator learning.

The second part of the table shows the zero-shot performance of larger pre-trained models ($\geqslant 30$M). It is observed that larger models significantly outperform smaller models. Specifically, our largest DPOT model shows notably better results compared to MPP-L and smaller DPOT models. 
This shows that our model scales better than the typical vision-based transformers. However, we also observe that on the PDB-DP subset, MPP-L achieves lower L2RE compared with our models, which shows there is room left for improvement.
Moreover, compared with DPOT-S, the DPOT-L model achieves better results on 10 datasets with improvement up to 39\% (on PDB-$0.1, 0.01$). The trend of consistent improvement when scaling up demonstrates potential for future application in pre-training on a larger scale.

The last part of the table displays the performance of differently sized pre-trained DPOTs after fine-tuning. The ``-200'' and ``-500'' suffixes denote fine-tuning on each subset for 200 and 500 epochs, respectively.  We see that larger models also yield better fine-tuning results. The pre-training plus fine-tuning of DPOT-L(0.5B) achieves SOTA in 9 out of 12 tasks. It reduces L2RE for more than 50\% on 8 out of 12 tasks compared with all zero-shot methods. This suggests that larger pre-trained models extract more knowledge from the datasets and provide significantly better initialization for fine-tuning. 
Moreover, we also observe that fine-tuning more steps is usually better than fewer steps, which shows there exists a trade-off between the computational cost and performance.

In summary, we demonstrate that our DPOT models show great advantages on these challenging benchmarks by using the auto-regressive denoising pre-training and scaling up model sizes. Moreover, results also show that the model architecture of DPOT is more scalable compared with vision-transformer architecture and traditional FNO models. By fine-tuning the DPOT-L model (0.5B), we achieve remarkably better results across these tasks. This indicates that systematically learning PDE representations from a large volume of PDE data is a promising and scalable direction, enhancing the performance and data efficiency of downstream operator learning tasks.

\subsection{Downstream Tasks Experiments}

\begin{table}[]
\scriptsize
\centering
\vspace{-1ex}
\begin{tabular}{c|ccc}
\hline
L2RE/L1-Medium$^*$ & Turbulence & 3D dataset & Steady$^*$ \\
Dataset & PDB-turb & PDB & CNO \\ \hline
(Geo-)FNO & 0.193 & 0.410 & 0.0357 \\
MPP-FT & 0.152 & -- & -- \\
DPOT-Vanilla & 0.167 & 0.262 & 0.0331 \\
DPOT-FT & \textbf{0.135} & \textbf{0.226} & \textbf{0.0230} \\ \hline
\end{tabular}
\caption{Experimental results of fine-tuning on downstream tasks. The first row shows the challenges of the downstream tasks. For steady-state PDE with $^*$, we adopt relative median $L^1$ error (L1-Median) following the original work~\cite{raonic2023convolutional}. Lower error means better performance, and we bold the best results.}
\label{tb-downstream}
\vspace{-3ex}
\end{table}

\begin{figure*}[t]
    \vspace{-1ex}
    \centering
    \begin{minipage}[t]{0.4\textwidth}
     \includegraphics[width=\textwidth]{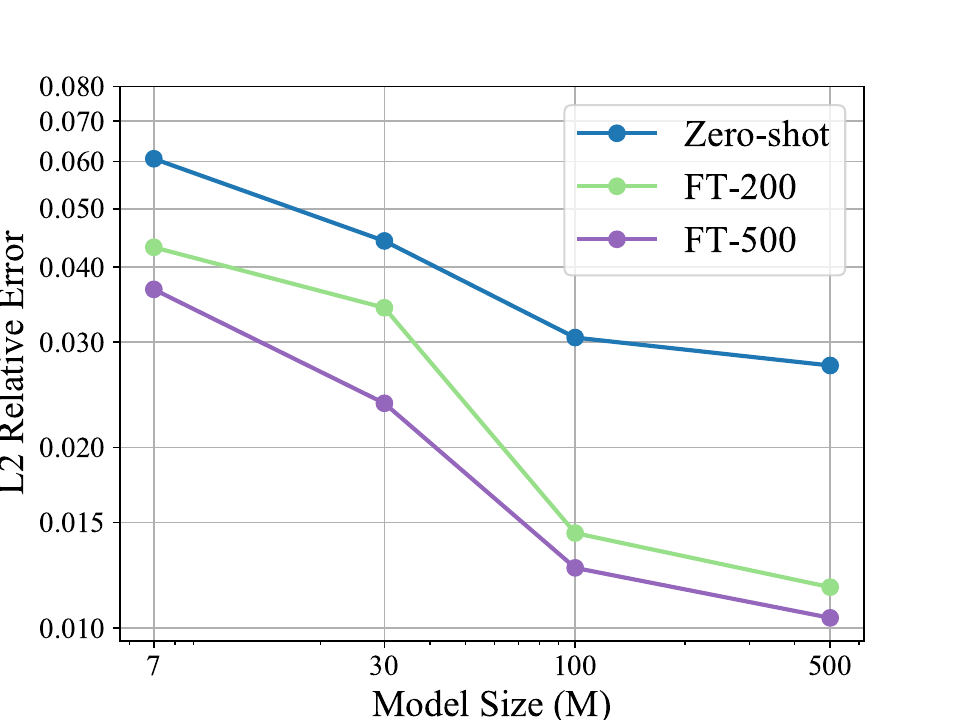}
    \end{minipage}
     \begin{minipage}[t]{0.4\textwidth}
     \includegraphics[width=\textwidth]{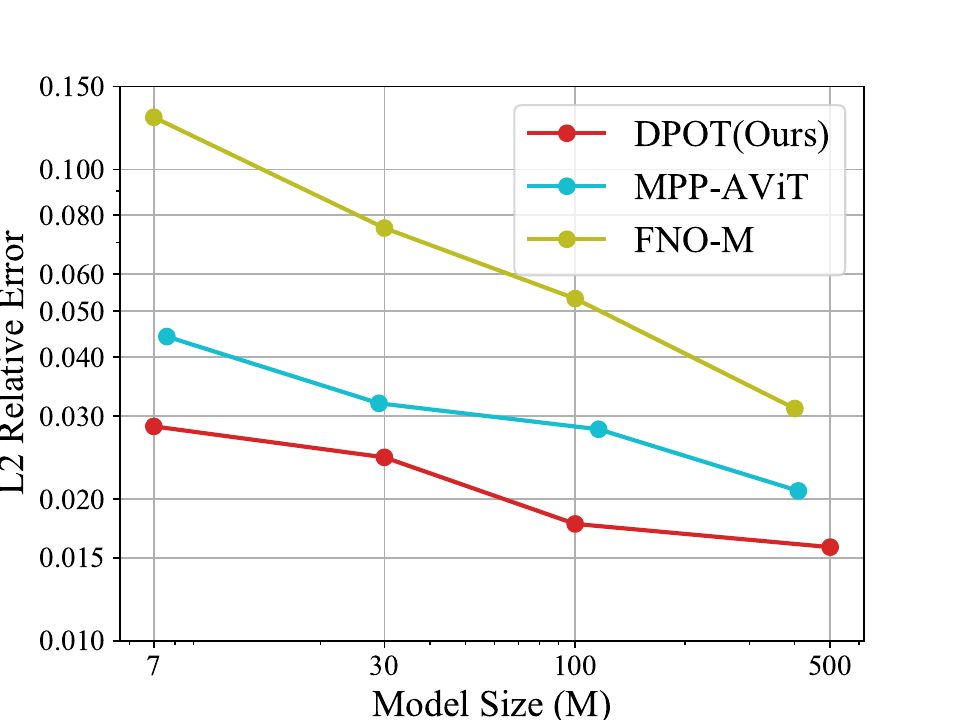}
    \end{minipage}
    \caption{Results of scaling experiments for different dataset sizes (left) and different numbers of layers (right).}
    \label{fig3}
\end{figure*}
To validate the effectiveness of our pre-trained model as a foundation for enhancing various types and complexities of PDE downstream tasks, we conducted the following experiments to explore the broader usage of the model. We transferred the weights of the Fourier attention block from our pre-trained model to the models for downstream tasks as initial values.
We choose our downstream tasks, including trajectory prediction for 2D high-resolution turbulence flow, 3D Navier-Stokes equations, and operator learning for steady-state PDEs. The results are displayed in Table \ref{tb-downstream}. The DPOT-Vanilla denotes the results of training the DPOT from scratch for 500 epochs, while DPOT-FT represents training for the same number of steps but using pre-trained weights for initialization. We have the following observations from the results.

 First, the results in the first column of the table represent the fine-tuning results on a high-resolution turbulence dataset with limited data\cite{takamoto2022pdebench}, compared with other methods. We observe that both MPP-FT and DPOT-FT achieve better results than training from scratch. Notably, our DPOT-FT performs better, which shows that it learns more effective representations from pre-training. Second, the results in the middle column of the table demonstrate DPOT's transferability to higher-dimensional datasets, which reduces error from $41\%$ to $22.6\%$. We find that even though only 2D data is used in pre-training, the learned representations still improve performance on 3D problems. Third, the last column shows that our model also achieves the best results when fine-tuned on single-step steady-state problems. These results validate the extraordinary versatility of our model, which can be easily extended to a wide range of downstream tasks.

\subsection{Scaling Experiments}

The change of performance with increasing sizes is a critical property of pre-trained models. 
Here, we conduct scaling experiments to verify the scalability.
We present the results in Figure \ref{fig3}. We observe that as the model size increases, the zero-shot test error consistently decreases, roughly following a scaling law. Specifically, from the left figure, when the model size is scaled from 7M to 500M parameters, the zero-shot L2RE reduces from $0.06$ to $0.028$. Additionally, fine-tuning the model on specific datasets results in enhanced performance. From the right figure, though all models enjoy scaling properties, our DPOT generalizes much better. Our 7M model achieves better performance compared with the 30M AViT model, showcasing better parameter efficiency.

\subsection{Ablation Experiments}
Here, we conduct ablation studies by training small-scale models on PDEBench datasets and compare the averaged zero-shot performance on corresponding test datasets. 

\begin{table}[]
\centering
\scriptsize
\begin{tabular}{c|cc|ccc}
\hline
$N_{h}$ & Train loss & Test L2RE & $P$ & Train loss & Test L2RE \\ \hline
1 & 0.0139  & 0.0214 & 2 & 0.0242 & 0.369 \\
4 & 0.0129  & 0.0186 & 4 & 0.0114 & 0.0275 \\
8 & 0.0127 & \textbf{0.0174} & 8 & 0.0127  & \textbf{0.0174} \\
16 & 0.0138 & 0.0188 & 16 & 0.0186 & 0.0278 \\ \hline
\end{tabular}
\label{tb-ab1}
\caption{Results of ablation experiments on the influences of the number of heads (left part) and patch sizes (right part).}
\vspace{-2ex}
\end{table}

\textbf{Influences of the number of heads/patch size.} 
The number of heads in our Fourier attention layer is a crucial hyperparameter.
This approach increases the diversity of the embeddings and reduces the computational complexity.
Moreover, the patch size $P$ in the encoder determines the spatial resolution size in the latent space. Here, we explore the effects of varying the number of heads and patch sizes by choosing from \{1, 2, 4, 8, 16\}. The results of these comparisons are presented in Table \ref{tb-ab1}. First, we find that choosing a medium number of heads, e.g., 4 or 8, leads to lower training loss, and 8 heads achieves the lowest testing loss. Second, on the right side, we see that patch size is a critical parameter that affects training loss and testing loss. We observe that choosing $P=8$ achieves significantly lower testing error. Additionally, choosing a small patch size like $P=2$ leads to poor performance.

\begin{table}[]
\vspace{-1ex}
\scriptsize
\centering
\begin{tabular}{c|cc}
\hline
Noise level $\varepsilon$ & Train loss & Test L2RE \\ \hline
0 & 0.00735 & 0.0178 \\
5E-5 & 0.00769 & \textbf{0.0152} \\
5E-4 & 0.0133 & 0.0156 \\
5E-3 & 0.0411 & 0.0672 \\
5E-2 & 0.166 & 0.753 \\ \hline
\end{tabular}
\label{tb-ab2}
\caption{Results of ablation experiments on the influences of the noise injection level $\varepsilon$.}
\vspace{-4ex}
\end{table}
\textbf{Influence of noise injection level}.
Here, we discuss the impact of the magnitude of noise injection on network training and generalization. As mentioned before, we observed that introducing noise into the input during auto-regressive training enhances the model's robustness and stability for long trajectories in the test set. We experiment with noise levels $\varepsilon$ from $5e-5$ to $5e-2$ and the results are shown in Table \ref{tb-ab2}. 
We find that the training error increases with the intensity of the injected noise, indicating that data with added noise becomes more challenging to learn.
However, the test error shows a tendency to decrease from $0.0178$ to $0.0152$ before increasing with the addition of noise.
We could view the noise injection as a regularization, and an appropriate level of noise benefits the generalization.

\vspace{-1ex}
\section{Conclusion}
In this paper, we proposed the DPOT for large-scale PDE pre-training. We designed the new auto-regressive denoising strategy and a new model architecture based on Fourier attention. It was scaled up to 1B on 10+ PDE datasets. Then, we conducted extensive experiments on many PDE benchmarks and downstream PDE tasks. Our work is a pioneering work in large-scale exploration of the scaling behavior of PDE pre-training. It shows great potential for future applications as downstream tasks in practical problems. 

\newpage
\section*{Broader Impact}
Neural operators are an important problem in scientific machine learning. Pre-training on PDE data offers a promising avenue to enhance the effectiveness and data efficiency of neural operators. This paper introduces DPOT, which leverages large-scale datasets from various PDEs for self-supervised learning, thereby broadly enhancing performance on downstream tasks. Such advancements hold the potential for application in real-world industrial manufacturing and scientific discovery. Regarding potential negative impacts, it is important to consider that neural network predictions of physical systems still entail errors and lack interpretability, which could pose risks in scenarios that demand high levels of safety and robustness. There are no serious ethical issues as this is basic research.

\bibliography{ref}

\begin{thebibliography}{51}
\providecommand{\natexlab}[1]{#1}
\providecommand{\url}[1]{\texttt{#1}}
\expandafter\ifx\csname urlstyle\endcsname\relax
  \providecommand{\doi}[1]{doi: #1}\else
  \providecommand{\doi}{doi: \begingroup \urlstyle{rm}\Url}\fi

\bibitem[Alberti et~al.(2023)Alberti, Dern, Thesing, and Kutyniok]{alberti2023sumformer}
Alberti, S., Dern, N., Thesing, L., and Kutyniok, G.
\newblock Sumformer: Universal approximation for efficient transformers.
\newblock In \emph{Topological, Algebraic and Geometric Learning Workshops 2023}, pp.\  72--86. PMLR, 2023.

\bibitem[Augenstein et~al.(2023)Augenstein, Repan, and Rockstuhl]{augenstein2023neural}
Augenstein, Y., Repan, T., and Rockstuhl, C.
\newblock Neural operator-based surrogate solver for free-form electromagnetic inverse design.
\newblock \emph{ACS Photonics}, 2023.

\bibitem[Azzizadenesheli et~al.(2023)Azzizadenesheli, Kovachki, Li, Liu-Schiaffini, Kossaifi, and Anandkumar]{azzizadenesheli2023neural}
Azzizadenesheli, K., Kovachki, N., Li, Z., Liu-Schiaffini, M., Kossaifi, J., and Anandkumar, A.
\newblock Neural operators for accelerating scientific simulations and design.
\newblock \emph{arXiv preprint arXiv:2309.15325}, 2023.

\bibitem[Bengio et~al.(2015)Bengio, Vinyals, Jaitly, and Shazeer]{bengio2015scheduled}
Bengio, S., Vinyals, O., Jaitly, N., and Shazeer, N.
\newblock Scheduled sampling for sequence prediction with recurrent neural networks.
\newblock \emph{Advances in neural information processing systems}, 28, 2015.

\bibitem[Brandstetter et~al.(2022)Brandstetter, Worrall, and Welling]{brandstetter2022message}
Brandstetter, J., Worrall, D., and Welling, M.
\newblock Message passing neural pde solvers.
\newblock \emph{arXiv preprint arXiv:2202.03376}, 2022.

\bibitem[Brown et~al.(2020)Brown, Mann, Ryder, Subbiah, Kaplan, Dhariwal, Neelakantan, Shyam, Sastry, Askell, et~al.]{brown2020language}
Brown, T., Mann, B., Ryder, N., Subbiah, M., Kaplan, J.~D., Dhariwal, P., Neelakantan, A., Shyam, P., Sastry, G., Askell, A., et~al.
\newblock Language models are few-shot learners.
\newblock \emph{Advances in neural information processing systems}, 33:\penalty0 1877--1901, 2020.

\bibitem[Cao(2021)]{cao2021choose}
Cao, S.
\newblock Choose a transformer: Fourier or galerkin.
\newblock \emph{Advances in neural information processing systems}, 34:\penalty0 24924--24940, 2021.

\bibitem[Cuomo et~al.(2022)Cuomo, Di~Cola, Giampaolo, Rozza, Raissi, and Piccialli]{cuomo2022scientific}
Cuomo, S., Di~Cola, V.~S., Giampaolo, F., Rozza, G., Raissi, M., and Piccialli, F.
\newblock Scientific machine learning through physics--informed neural networks: Where we are and what’s next.
\newblock \emph{Journal of Scientific Computing}, 92\penalty0 (3):\penalty0 88, 2022.

\bibitem[Dosovitskiy et~al.(2020)Dosovitskiy, Beyer, Kolesnikov, Weissenborn, Zhai, Unterthiner, Dehghani, Minderer, Heigold, Gelly, et~al.]{dosovitskiy2020image}
Dosovitskiy, A., Beyer, L., Kolesnikov, A., Weissenborn, D., Zhai, X., Unterthiner, T., Dehghani, M., Minderer, M., Heigold, G., Gelly, S., et~al.
\newblock An image is worth 16x16 words: Transformers for image recognition at scale.
\newblock \emph{arXiv preprint arXiv:2010.11929}, 2020.

\bibitem[Guibas et~al.(2021)Guibas, Mardani, Li, Tao, Anandkumar, and Catanzaro]{guibas2021adaptive}
Guibas, J., Mardani, M., Li, Z., Tao, A., Anandkumar, A., and Catanzaro, B.
\newblock Adaptive fourier neural operators: Efficient token mixers for transformers.
\newblock \emph{arXiv preprint arXiv:2111.13587}, 2021.

\bibitem[Gupta \& Brandstetter(2022)Gupta and Brandstetter]{gupta2022towards}
Gupta, J.~K. and Brandstetter, J.
\newblock Towards multi-spatiotemporal-scale generalized pde modeling.
\newblock \emph{arXiv preprint arXiv:2209.15616}, 2022.

\bibitem[Hao et~al.(2023)Hao, Wang, Su, Ying, Dong, Liu, Cheng, Song, and Zhu]{hao2023gnot}
Hao, Z., Wang, Z., Su, H., Ying, C., Dong, Y., Liu, S., Cheng, Z., Song, J., and Zhu, J.
\newblock Gnot: A general neural operator transformer for operator learning.
\newblock In \emph{International Conference on Machine Learning}, pp.\  12556--12569. PMLR, 2023.

\bibitem[He et~al.(2020)He, Fan, Wu, Xie, and Girshick]{he2020momentum}
He, K., Fan, H., Wu, Y., Xie, S., and Girshick, R.
\newblock Momentum contrast for unsupervised visual representation learning.
\newblock In \emph{Proceedings of the IEEE/CVF conference on computer vision and pattern recognition}, pp.\  9729--9738, 2020.

\bibitem[He et~al.(2022)He, Chen, Xie, Li, Doll{\'a}r, and Girshick]{he2022masked}
He, K., Chen, X., Xie, S., Li, Y., Doll{\'a}r, P., and Girshick, R.
\newblock Masked autoencoders are scalable vision learners.
\newblock In \emph{Proceedings of the IEEE/CVF conference on computer vision and pattern recognition}, pp.\  16000--16009, 2022.

\bibitem[Jumper et~al.(2021)Jumper, Evans, Pritzel, Green, Figurnov, Ronneberger, Tunyasuvunakool, Bates, {\v{Z}}{\'\i}dek, Potapenko, et~al.]{jumper2021highly}
Jumper, J., Evans, R., Pritzel, A., Green, T., Figurnov, M., Ronneberger, O., Tunyasuvunakool, K., Bates, R., {\v{Z}}{\'\i}dek, A., Potapenko, A., et~al.
\newblock Highly accurate protein structure prediction with alphafold.
\newblock \emph{Nature}, 596\penalty0 (7873):\penalty0 583--589, 2021.

\bibitem[Karniadakis et~al.(2021)Karniadakis, Kevrekidis, Lu, Perdikaris, Wang, and Yang]{karniadakis2021physics}
Karniadakis, G.~E., Kevrekidis, I.~G., Lu, L., Perdikaris, P., Wang, S., and Yang, L.
\newblock Physics-informed machine learning.
\newblock \emph{Nature Reviews Physics}, 3\penalty0 (6):\penalty0 422--440, 2021.

\bibitem[Kharazmi et~al.(2019)Kharazmi, Zhang, and Karniadakis]{kharazmi2019variational}
Kharazmi, E., Zhang, Z., and Karniadakis, G.~E.
\newblock Variational physics-informed neural networks for solving partial differential equations.
\newblock \emph{arXiv preprint arXiv:1912.00873}, 2019.

\bibitem[Li et~al.(2020)Li, Kovachki, Azizzadenesheli, Liu, Bhattacharya, Stuart, and Anandkumar]{li2020fourier}
Li, Z., Kovachki, N., Azizzadenesheli, K., Liu, B., Bhattacharya, K., Stuart, A., and Anandkumar, A.
\newblock Fourier neural operator for parametric partial differential equations.
\newblock \emph{arXiv preprint arXiv:2010.08895}, 2020.

\bibitem[Li et~al.(2021)Li, Zheng, Kovachki, Jin, Chen, Liu, Azizzadenesheli, and Anandkumar]{li2021physics}
Li, Z., Zheng, H., Kovachki, N., Jin, D., Chen, H., Liu, B., Azizzadenesheli, K., and Anandkumar, A.
\newblock Physics-informed neural operator for learning partial differential equations.
\newblock \emph{arXiv preprint arXiv:2111.03794}, 2021.

\bibitem[Li et~al.(2022{\natexlab{a}})Li, Meidani, and Farimani]{li2022transformer}
Li, Z., Meidani, K., and Farimani, A.~B.
\newblock Transformer for partial differential equations' operator learning.
\newblock \emph{arXiv preprint arXiv:2205.13671}, 2022{\natexlab{a}}.

\bibitem[Li et~al.(2022{\natexlab{b}})Li, Peng, Yuan, and Wang]{li2022fourier}
Li, Z., Peng, W., Yuan, Z., and Wang, J.
\newblock Fourier neural operator approach to large eddy simulation of three-dimensional turbulence.
\newblock \emph{Theoretical and Applied Mechanics Letters}, 12\penalty0 (6):\penalty0 100389, 2022{\natexlab{b}}.

\bibitem[Li et~al.(2023)Li, Kovachki, Choy, Li, Kossaifi, Otta, Nabian, Stadler, Hundt, Azizzadenesheli, et~al.]{li2023geometry}
Li, Z., Kovachki, N.~B., Choy, C., Li, B., Kossaifi, J., Otta, S.~P., Nabian, M.~A., Stadler, M., Hundt, C., Azizzadenesheli, K., et~al.
\newblock Geometry-informed neural operator for large-scale 3d pdes.
\newblock \emph{arXiv preprint arXiv:2309.00583}, 2023.

\bibitem[Liu et~al.(2023)Liu, Hao, Ying, Su, Cheng, and Zhu]{liu2023nuno}
Liu, S., Hao, Z., Ying, C., Su, H., Cheng, Z., and Zhu, J.
\newblock Nuno: A general framework for learning parametric pdes with non-uniform data.
\newblock \emph{arXiv preprint arXiv:2305.18694}, 2023.

\bibitem[Lu et~al.(2021)Lu, Jin, Pang, Zhang, and Karniadakis]{lu2021learning}
Lu, L., Jin, P., Pang, G., Zhang, Z., and Karniadakis, G.~E.
\newblock Learning nonlinear operators via deeponet based on the universal approximation theorem of operators.
\newblock \emph{Nature machine intelligence}, 3\penalty0 (3):\penalty0 218--229, 2021.

\bibitem[Luo et~al.(2023)Luo, Chen, and Zhang]{luo2023cfdbench}
Luo, Y., Chen, Y., and Zhang, Z.
\newblock Cfdbench: A comprehensive benchmark for machine learning methods in fluid dynamics.
\newblock \emph{arXiv preprint arXiv:2310.05963}, 2023.

\bibitem[McCabe et~al.(2023)McCabe, Blancard, Parker, Ohana, Cranmer, Bietti, Eickenberg, Golkar, Krawezik, Lanusse, et~al.]{mccabe2023multiple}
McCabe, M., Blancard, B. R.-S., Parker, L.~H., Ohana, R., Cranmer, M., Bietti, A., Eickenberg, M., Golkar, S., Krawezik, G., Lanusse, F., et~al.
\newblock Multiple physics pretraining for physical surrogate models.
\newblock \emph{arXiv preprint arXiv:2310.02994}, 2023.

\bibitem[Mialon et~al.(2023)Mialon, Garrido, Lawrence, Rehman, LeCun, and Kiani]{mialon2023self}
Mialon, G., Garrido, Q., Lawrence, H., Rehman, D., LeCun, Y., and Kiani, B.
\newblock Self-supervised learning with lie symmetries for partial differential equations.
\newblock In \emph{ICLR 2023 Workshop on Physics for Machine Learning}, 2023.

\bibitem[Micha{\l}owska et~al.(2023)Micha{\l}owska, Goswami, Karniadakis, and Riemer-S{\o}rensen]{michalowska2023neural}
Micha{\l}owska, K., Goswami, S., Karniadakis, G.~E., and Riemer-S{\o}rensen, S.
\newblock Neural operator learning for long-time integration in dynamical systems with recurrent neural networks.
\newblock \emph{arXiv preprint arXiv:2303.02243}, 2023.

\bibitem[Mukkavilli et~al.(2023)Mukkavilli, Civitarese, Schmude, Jakubik, Jones, Nguyen, Phillips, Roy, Singh, Watson, et~al.]{mukkavilli2023ai}
Mukkavilli, S.~K., Civitarese, D.~S., Schmude, J., Jakubik, J., Jones, A., Nguyen, N., Phillips, C., Roy, S., Singh, S., Watson, C., et~al.
\newblock Ai foundation models for weather and climate: Applications, design, and implementation.
\newblock \emph{arXiv preprint arXiv:2309.10808}, 2023.

\bibitem[Nguyen et~al.(2023)Nguyen, Brandstetter, Kapoor, Gupta, and Grover]{nguyen2023climax}
Nguyen, T., Brandstetter, J., Kapoor, A., Gupta, J.~K., and Grover, A.
\newblock Climax: A foundation model for weather and climate.
\newblock \emph{arXiv preprint arXiv:2301.10343}, 2023.

\bibitem[Ovadia et~al.(2023)Ovadia, Kahana, Stinis, Turkel, and Karniadakis]{ovadia2023vito}
Ovadia, O., Kahana, A., Stinis, P., Turkel, E., and Karniadakis, G.~E.
\newblock Vito: Vision transformer-operator.
\newblock \emph{arXiv preprint arXiv:2303.08891}, 2023.

\bibitem[Pathak et~al.(2022)Pathak, Subramanian, Harrington, Raja, Chattopadhyay, Mardani, Kurth, Hall, Li, Azizzadenesheli, et~al.]{pathak2022fourcastnet}
Pathak, J., Subramanian, S., Harrington, P., Raja, S., Chattopadhyay, A., Mardani, M., Kurth, T., Hall, D., Li, Z., Azizzadenesheli, K., et~al.
\newblock Fourcastnet: A global data-driven high-resolution weather model using adaptive fourier neural operators.
\newblock \emph{arXiv preprint arXiv:2202.11214}, 2022.

\bibitem[Radford et~al.(2018)Radford, Narasimhan, Salimans, Sutskever, et~al.]{radford2018improving}
Radford, A., Narasimhan, K., Salimans, T., Sutskever, I., et~al.
\newblock Improving language understanding by generative pre-training.
\newblock 2018.

\bibitem[Radford et~al.(2019)Radford, Wu, Child, Luan, Amodei, Sutskever, et~al.]{radford2019language}
Radford, A., Wu, J., Child, R., Luan, D., Amodei, D., Sutskever, I., et~al.
\newblock Language models are unsupervised multitask learners.
\newblock \emph{OpenAI blog}, 1\penalty0 (8):\penalty0 9, 2019.

\bibitem[Raissi et~al.(2019)Raissi, Perdikaris, and Karniadakis]{raissi2019physics}
Raissi, M., Perdikaris, P., and Karniadakis, G.~E.
\newblock Physics-informed neural networks: A deep learning framework for solving forward and inverse problems involving nonlinear partial differential equations.
\newblock \emph{Journal of Computational physics}, 378:\penalty0 686--707, 2019.

\bibitem[Raoni{\'c} et~al.(2023)Raoni{\'c}, Molinaro, Rohner, Mishra, and de~Bezenac]{raonic2023convolutional}
Raoni{\'c}, B., Molinaro, R., Rohner, T., Mishra, S., and de~Bezenac, E.
\newblock Convolutional neural operators.
\newblock \emph{arXiv preprint arXiv:2302.01178}, 2023.

\bibitem[Ronneberger et~al.(2015)Ronneberger, Fischer, and Brox]{ronneberger2015u}
Ronneberger, O., Fischer, P., and Brox, T.
\newblock U-net: Convolutional networks for biomedical image segmentation.
\newblock In \emph{Medical Image Computing and Computer-Assisted Intervention--MICCAI 2015: 18th International Conference, Munich, Germany, October 5-9, 2015, Proceedings, Part III 18}, pp.\  234--241. Springer, 2015.

\bibitem[Shukla et~al.(2024)Shukla, Oommen, Peyvan, Penwarden, Plewacki, Bravo, Ghoshal, Kirby, and Karniadakis]{shukla2024deep}
Shukla, K., Oommen, V., Peyvan, A., Penwarden, M., Plewacki, N., Bravo, L., Ghoshal, A., Kirby, R.~M., and Karniadakis, G.~E.
\newblock Deep neural operators as accurate surrogates for shape optimization.
\newblock \emph{Engineering Applications of Artificial Intelligence}, 129:\penalty0 107615, 2024.

\bibitem[Subramanian et~al.(2023)Subramanian, Harrington, Keutzer, Bhimji, Morozov, Mahoney, and Gholami]{subramanian2023towards}
Subramanian, S., Harrington, P., Keutzer, K., Bhimji, W., Morozov, D., Mahoney, M., and Gholami, A.
\newblock Towards foundation models for scientific machine learning: Characterizing scaling and transfer behavior.
\newblock \emph{arXiv preprint arXiv:2306.00258}, 2023.

\bibitem[Takamoto et~al.(2022)Takamoto, Praditia, Leiteritz, MacKinlay, Alesiani, Pfl{\"u}ger, and Niepert]{takamoto2022pdebench}
Takamoto, M., Praditia, T., Leiteritz, R., MacKinlay, D., Alesiani, F., Pfl{\"u}ger, D., and Niepert, M.
\newblock Pdebench: An extensive benchmark for scientific machine learning.
\newblock \emph{Advances in Neural Information Processing Systems}, 35:\penalty0 1596--1611, 2022.

\bibitem[Tolstikhin et~al.(2021)Tolstikhin, Houlsby, Kolesnikov, Beyer, Zhai, Unterthiner, Yung, Steiner, Keysers, Uszkoreit, et~al.]{tolstikhin2021mlp}
Tolstikhin, I.~O., Houlsby, N., Kolesnikov, A., Beyer, L., Zhai, X., Unterthiner, T., Yung, J., Steiner, A., Keysers, D., Uszkoreit, J., et~al.
\newblock Mlp-mixer: An all-mlp architecture for vision.
\newblock \emph{Advances in neural information processing systems}, 34:\penalty0 24261--24272, 2021.

\bibitem[Tran et~al.(2021)Tran, Mathews, Xie, and Ong]{tran2021factorized}
Tran, A., Mathews, A., Xie, L., and Ong, C.~S.
\newblock Factorized fourier neural operators.
\newblock \emph{arXiv preprint arXiv:2111.13802}, 2021.

\bibitem[Wang et~al.(2021{\natexlab{a}})Wang, Teng, and Perdikaris]{wang2021understanding}
Wang, S., Teng, Y., and Perdikaris, P.
\newblock Understanding and mitigating gradient flow pathologies in physics-informed neural networks.
\newblock \emph{SIAM Journal on Scientific Computing}, 43\penalty0 (5):\penalty0 A3055--A3081, 2021{\natexlab{a}}.

\bibitem[Wang et~al.(2021{\natexlab{b}})Wang, Wang, and Perdikaris]{wang2021learning}
Wang, S., Wang, H., and Perdikaris, P.
\newblock Learning the solution operator of parametric partial differential equations with physics-informed deeponets.
\newblock \emph{Science advances}, 7\penalty0 (40):\penalty0 eabi8605, 2021{\natexlab{b}}.

\bibitem[Wu et~al.(2023)Wu, Hu, Luo, Wang, and Long]{wu2023solving}
Wu, H., Hu, T., Luo, H., Wang, J., and Long, M.
\newblock Solving high-dimensional pdes with latent spectral models.
\newblock \emph{arXiv preprint arXiv:2301.12664}, 2023.

\bibitem[Wu \& He(2018)Wu and He]{wu2018group}
Wu, Y. and He, K.
\newblock Group normalization.
\newblock In \emph{Proceedings of the European conference on computer vision (ECCV)}, pp.\  3--19, 2018.

\bibitem[Yang et~al.(2023)Yang, Liu, Meng, and Osher]{yang2023context}
Yang, L., Liu, S., Meng, T., and Osher, S.~J.
\newblock In-context operator learning with data prompts for differential equation problems.
\newblock \emph{Proceedings of the National Academy of Sciences}, 120\penalty0 (39):\penalty0 e2310142120, 2023.

\bibitem[Zachmanoglou \& Thoe(1986)Zachmanoglou and Thoe]{zachmanoglou1986introduction}
Zachmanoglou, E.~C. and Thoe, D.~W.
\newblock \emph{Introduction to partial differential equations with applications}.
\newblock Courier Corporation, 1986.

\bibitem[Zhang et~al.(2019)Zhang, Feng, Meng, You, and Liu]{zhang2019bridging}
Zhang, W., Feng, Y., Meng, F., You, D., and Liu, Q.
\newblock Bridging the gap between training and inference for neural machine translation.
\newblock \emph{arXiv preprint arXiv:1906.02448}, 2019.

\bibitem[Zhao et~al.(2024)Zhao, Chen, Gong, Zhou, Yao, and Zhang]{zhao2024recfno}
Zhao, X., Chen, X., Gong, Z., Zhou, W., Yao, W., and Zhang, Y.
\newblock Recfno: a resolution-invariant flow and heat field reconstruction method from sparse observations via fourier neural operator.
\newblock \emph{International Journal of Thermal Sciences}, 195:\penalty0 108619, 2024.

\bibitem[Zhou et~al.(2023)Zhou, Gao, Ding, Zheng, Xu, Wei, Zhang, and Ke]{zhou2023uni}
Zhou, G., Gao, Z., Ding, Q., Zheng, H., Xu, H., Wei, Z., Zhang, L., and Ke, G.
\newblock Uni-mol: a universal 3d molecular representation learning framework.
\newblock 2023.

\end{thebibliography}
\bibliographystyle{icml2024}

\newpage
\appendix
\onecolumn

\section{Details of Pre-training and Fine-tuning}
\subsection{Model sizes and training details}
\label{train-details}
\textbf{Pre-training.}
We selected four models of varying sizes, i.e., DPOT-Tiny, DPOT-Small, DPOT-Medium, and DPOT-Large. The specific parameters of these models are shown in the table below. For the pre-training stage, we set the learning rate to $1 \times 10^{-3}$ and used a One-cycle learning rate schedule over 1000 epochs, with the first 200 epochs as the warm-up phase. The AdamW optimizer was employed with a weight decay of $1 \times 10^{-6}$ and momentum parameters $(\beta_1, \beta_2) = (0.9, 0.9)$. Training was conducted using eight A800 GPUs, with a total batch size of 160. The patch size was set to 8. For datasets with long trajectories, such as PDB-DR and PDB-SWE, we assigned a weight of $w=3$, while for others, the weight was set to 1.
For our training process, we selected $T = 10$ timesteps to predict the next frame, aligning with the original settings of most datasets.

\textbf{Fine-tuning}. Our model enjoys versatility for fine-tuning across a variety of downstream tasks involving different PDE datasets. The key module of our model is the Fourier attention layer, where the parameters are shared across different frequency components in the channel dimension. This shared parameterization enables efficient reuse of the model's parameters. Specifically, the FFT and IFFT operations within the model are replaced with their counterparts suitable for the specific data dimensions.  Additionally, other components, such as patch embedding and positioning embedding, are adjusted to align with the dimensionality of the target data. For instance, for 3D data, we could use existing parameters but replace the 2D FFT with 3D FFT.

\subsection{Details of Datasets}
\label{dataset-details}
Here, we list the PDEs of datasets we used for pre-training.
\begin{itemize}
    \item \textbf{FNO-$\nu$:} The quantity of interest(QoI) is the vorticity $w(x,t), (x, t) \in [0, 1]^2 \times [0, T]$ and it satisfies,
    \begin{eqnarray}
    \partial_t w + u \cdummy \nabla w & = & \nu \Delta w + f (x),\\
    \nabla \cdummy u & = & 0.
    \end{eqnarray}
    \item \textbf{PDEBench-CMS:} We need to predict the velocity, pressure, and density fields $\boldsymbol{u}(x,t), p(x,t), \rho(x,t)$ where $(x,t) \in [0, 1]^2 \times [0, 1]$. The PDEs are,
    \begin{eqnarray}
    \partial_t \rho + \nabla \cdummy (\rho \tmmathbf{u}) & = & 0,\\
    \rho (\partial_t \tmmathbf{u}+\tmmathbf{u} \cdummy \nabla \tmmathbf{u}) &
    = & - \nabla p + \eta \Delta \tmmathbf{u}+ (\varsigma + \eta / 3) \nabla
    (\nabla \cdummy \tmmathbf{u}),\\
    \partial_t \left( \frac{3}{2} p + \frac{\rho u^2}{2} \right) & = & -
    \nabla \cdummy \left( \left( \varepsilon + p + \frac{\rho u^2}{2} \right)
    \tmmathbf{u}-\tmmathbf{u} \cdummy \sigma' \right) .
  \end{eqnarray}
  \item \textbf{PDEBench-SWE:} We need to predict water depth $h(x,t)$ where the domain is $[- 1, 1]^2 \times [0, 5]$. The PDEs are as follows,
  \begin{eqnarray}
    \partial_t h + \nabla \cdummy (h\tmmathbf{u}) & = & 0,\\
    \partial_t (h\tmmathbf{u}) + \nabla \cdummy \left( \frac{1}{2}
    h\tmmathbf{u}^2 + \frac{1}{2} g_r h^2 \right) & = & - g_r h \nabla b.
  \end{eqnarray} 
  \item \textbf{PDEBench-DR:} We need to predict the density fields $\boldsymbol{u}(x,t)$. The domain is $[- 2.5, 2.5]^2 \times [0, 1]$ and the PDEs are,
  \begin{eqnarray*}
    \partial_t \tmmathbf{u} & = & \tmmathbf{D} \nabla^2
    \tmmathbf{u}+\tmmathbf{R} (\tmmathbf{u}).
  \end{eqnarray*}
  \item \textbf{PDEArena-NS1/2:} We need to predict the velocity, pressure, and density fields $\boldsymbol{u}(x,t), p(x,t), \rho(x,t)$ where $(x, t) \in [0, 32]^2 \times [0, 24]$. The PDEs are,
  \begin{eqnarray}
    \partial_t \tmmathbf{v} & = & -\tmmathbf{v} \cdummy \nabla \tmmathbf{v}+
    \mu \nabla^2 \tmmathbf{v}- \nabla p +\tmmathbf{f},\\
    \nabla \cdummy \tmmathbf{v} & = & 0.
  \end{eqnarray}
  \item \textbf{CFDBench:} We need to predict the velocity and pressure fields $\boldsymbol{u}(x,t), p(x,t)$. The domains are different as this is a dataset with irregular geometries. The PDEs are,
  \begin{eqnarray*}
    \partial_t (\rho \tmmathbf{u}) + \nabla \cdummy (\rho \tmmathbf{u}^2) & =
    & - \nabla p + \nabla \cdummy \mu (\nabla \tmmathbf{u}+ \nabla
    \tmmathbf{u}^T),\\
    \nabla \cdummy (\rho \tmmathbf{u}) & = & 0.
  \end{eqnarray*}

\end{itemize}

\begin{table}[h]
\centering
\begin{tabular}{c|ccccc}
\hline
Size & Attention dim & MLP dim & Layers & Heads & Model size \\ \hline
Tiny & 512 & 512 & 4 & 4 & 7M \\
Small & 1024 & 1024 & 6 & 8 & 30M \\
Medium & 1024 & 4096 & 12 & 8 & 122M \\
Large & 1536 & 6144 & 24 & 16 & 509M \\
Huge & 2048 & 8092 & 27 & 8 & 1.03B \\\hline
\end{tabular}
\label{model-size}
\caption{Configurations of DPOT of different sizes.}
\end{table}

\section{Supplementary Experimental Results}

\subsection{Results of DPOT with 1B Parameters}

We have introduced a model with \textbf{1 billion parameters (DPOT-Huge)}, making it the largest among all current PDE pre-training models and an order of magnitude larger than all other models. We tested its zero-shot performance, and the results, compared with MPP-Large and DPOT-L, are shown in the Table \ref{tb-1b}. We observed significant improvements with DPOT-H as the model's parameters increased, with performance boosts of over 50\% on some datasets (like FNO datasets, PDEArena-NS) compared to DPOT-L. Except for a slight decrease in performance on the CFDBench dataset, DPOT-H nearly dominates the results.

{
\begin{table*}
\centering
\begin{tabular}{c|ccc}
\hline
L2RE & MPP-L(0.4B) & DPOT-L (0.5B) & DPOT-H(1B) \\
\hline
FNO-1e-5         & --          & 0.0550        & \textbf{0.0174}  \\
FNO-1e-4         & --          & 0.0274        & \textbf{0.01314} \\
FNO-1e-3         & --          & 0.00528       & \textbf{0.00229} \\
PDB-(1, 0.1)     & --          & 0.0100        & \textbf{0.00961} \\
PDB-(1, 0.01)    & --          & 0.0216        & \textbf{0.0180}  \\
PDB-M1           & 0.0208      & 0.0158        & \textbf{0.0138}  \\
PDB-(0.1, 0.1)   & --          & 0.00872       & \textbf{0.00847} \\
PDB-(0.1, 0.01)  & --          & 0.0115        & \textbf{0.0105}  \\
PDB-M0.1         & 0.0147      & 0.0101        & \textbf{0.00948} \\
PDB-DR           & \textbf{0.0098}  & 0.0232        & 0.0191      \\
PDB-SWE          & 0.00220     & 0.00233       & \textbf{0.00199} \\
PDEArena-NS      & --          & 0.0798        & \textbf{0.0379}  \\
PDEArena-NS-cond & --          & 0.240         & \textbf{0.213}   \\
CFDBench         & --          & \textbf{0.00650}   & 0.00749     \\
\hline
\end{tabular}
\caption{Zero-shot performance comparison for 1B DPOT model.}
\label{tb-1b}
\end{table*}

}

\subsection{Timing experiments}
Here, we show the inference speed of different DPOTs. We measure the average time of predicting a single timestep data. The results are shown in Table
\ref{inf-time}.

\begin{table}[h]
\centering
\begin{tabular}{c|c}
\hline
Model & Inference time(ms) \\ \hline
DPOT-Ti & 0.408 \\
DPOT-S & 0.986 \\
DPOT-M & 1.88 \\
DPOT-L & 5.87 \\ \hline
\end{tabular}
\label{inf-time}
\caption{Reults of single step inference time for different models.}
\end{table}

\subsection{Comparison with AFNO architecture}
\label{compare-afno}
As mentioned before, we do not use sparsity constraints to regularize the frequency features. Here, we compare the performance of our DPOT and AFNO using the same settings of the ablation study in the main text. The result is shown in the following Table \ref{tb-afno}. We see that enforcing sparsity at the frequency domain makes the training loss significantly higher, which also worsens the test performance. This shows that the soft shrink operation might not be suitable for PDE's operator learning. 

\begin{table}[h]
\centering
\begin{tabular}{c|cc}
\hline
L2RE & Train loss & Test loss \\ \hline
AFNO & 0.0664 & 0.121 \\
DPOT & 0.0127 & 0.0174 \\ \hline
\end{tabular}
\caption{Results of the comparison between AFNO and DPOT's model structures.}
\label{tb-afno}
\end{table}

\subsection{Performance under varying resolutions}
An important property of a neural operator is its ability to generalize to different resolutions. Since there are convolutional layers (i.e, patchification layers) in DPOT, we need to slightly modify the convolutional layers following CNO \cite{raonic2023convolutional} to preserve its resolution generalization ability. We conduct experiments to measure the L2RE of DPOT under different resolutions. The datasets used are the same for ablation experiments. The results are shown in the Table \ref{tb-varyres}. 

\begin{table*}[]
\centering
\begin{tabular}{c|c}
\hline
Resolution & Test L2RE \\ \hline
32 & 0.0181 \\
48 & 0.0182 \\
72 & 0.0187 \\
96 & 0.0187 \\
128 & 0.0190 \\ \hline
\end{tabular}

\label{tb-varyres}
\caption{Results of test L2RE under different resolutions.}
\end{table*}

\subsection{Supplementary Experiments on Datasets with Long Trajectories}

We added a supplementary experiment on long trajectory problems using DPOT. We generated 1000 data with a trajectory length of 500 at a 128x128 resolution, using the \textbf{FNO-1e-3} dataset generation code. The model was trained and tested on subsets of 20, 50, 100, 200, and 500, recording the corresponding L2RE as shown in Table \ref{tb-long}. We observed that at very short step sizes, the improvement brought by pre-training is minimal due to already high prediction accuracy. However, as the trajectory length increases, the prediction difficulty significantly rises. The pre-trained DPOT's prediction error increases very slowly over time, whereas the error for the randomly initialized model grows quickly. This suggests that the pre-trained DPOT model can better utilize the prior knowledge of other PDEs to simulate the evolution of fluid over long trajectories.

\begin{table*}[]
\centering
\begin{tabular}{c|ccccc}
\hline
Num steps & 20 & 50 & 100 & 200 & 500 \\
\hline
DPOT & 0.00188 & 0.00192 & 0.00592 & 0.0241 & 0.0912 \\
DPOT-pretrained & \textbf{0.00148} & \textbf{0.00186} & \textbf{0.00335} & \textbf{0.0110} & \textbf{0.0385} \\
\hline
\end{tabular}
\caption{L2RE for DPOT and DPOT-pretrained at different trajectory steps}
\label{tb-long}
\end{table*}

\subsection{Additional Results on SuperBench Dataset}

We conducted additional experiments on the SuperBench dataset, specifically on the Navier-Stokes Kraichnan Turbulence Flow problem (Re=16000), with the task of upsampling data by 8x, i.e., from 128x128 to 1024x1024. Following the SuperBench paper, we used RFNE (lower score means better performance) as the metric for comparison. The results are as shown in the Table \ref{tb-superbench}, with SwinIR, WDSR, and EDSR results coming directly from the SuperBench original paper. We found that the non-pretrained DPOT network performs comparably to the current best model, EDSR, while the pre-trained DPOT model significantly outperforms EDSR. This demonstrates that pre-trained PDE models can enhance performance on downstream super-resolution tasks.

\begin{table*}[]
\centering
\begin{tabular}{c|c}
\hline
Model & RFNE (\%) \\
\hline
SwinIR & 0.80 \\
WDSR & 0.72 \\
EDSR & 0.57 \\
DPOT-vanilla & 0.59 \\
DPOT-pretrained & \textbf{0.46} \\
\hline
\end{tabular}
\caption{RFNE (\%) on the SuperBench dataset for the Navier-Stokes Kraichnan Turbulence (Re=16000) x8 Upsampling Task}
\label{tb-superbench}
\end{table*}

\subsection{Supplementary experiments on Kolmogorov turbulence flow}

Additionally, we also added a dataset for Kolmogorov turbulence flow, a dataset with notable chaotic behavior. We generated 128 training trajectories with a resolution of 256x256x200 for training and 16 for testing. The results are shown in Table \ref{tb-turb}. We found that the vanilla DPOT model and the FNO model have a full trajectory test error of 82.2\% and 104\%, indicating that the model almost learned nothing, especially as time progressed. In contrast, the DPOT-pretrained error was 33.5\%, significantly better than the non-pretrained DPOT model, suggesting it has learned some patterns of chaotic fluid evolution. It shows that pre-trained initialization is very helpful for complex, chaotic fluid simulations.

\begin{table*}[]
\centering
\begin{tabular}{@{}ccc@{}}
\toprule
L2RE                & Train loss (step-wise) & Test loss (total trajectory) \\ \midrule
FNO                 & 0.0931                 & 1.04                         \\
DPOT                & 0.0485                 & 0.822                        \\
\textbf{DPOT-pretrained} & \textbf{0.0296}        & \textbf{0.335}                \\ \bottomrule
\end{tabular}
\caption{Test results for Kolmogorov turbulence flow}
\label{tb-turb}
\end{table*}

\subsection{Comparison between Equal Data Sampling and Our Balanced Data Sampling}

We evaluated the impact of uniform sampling from all datasets combined versus our balanced sampling approach for pre-training. We retrained DPOT-Tiny ($\sim$7M) for comparison with the original model. The results are shown in Table \ref{tb-sampling}. Our sampling method resulted in more even convergence across different datasets. For instance, with equal sampling, some datasets showed very poor convergence, such as FNO-1e-5, PDB-DR, PDB-SWE, etc. Although the performance on a few datasets like PDB-M1 was slightly better with equal sampling, overall, balanced sampling outperformed equal sampling.

\begin{table*}[]
\centering
\begin{tabular}{c|cc}
\hline
DPOT-Tiny(L2RE) & Equal sampling & Balanced sampling \\
\hline
FNO-1e-5 & 0.177 & \textbf{0.0976} \\
FNO-1e-4 & 0.0701 & \textbf{0.0606} \\
FNO-1e-3 & 0.0327 & \textbf{0.00954} \\
PDB-(1, 0.1) & \textbf{0.0149} & 0.0173 \\
PDB-(1, 0.01) & \textbf{0.0376} & 0.0397 \\
PDB-M1 & \textbf{0.0265} & 0.0285 \\
PDB-(0.1, 0.1) & 0.0135 & \textbf{0.0132} \\
PDB-(0.1, 0.01) & 0.0236 & \textbf{0.0220} \\
PDB-M0.1 & 0.0185 & \textbf{0.0176} \\
PDB-DR & 0.170 & \textbf{0.0321} \\
PDB-SWE & 0.0225 & \textbf{0.00560} \\
PDEArena-NS & 0.126 & \textbf{0.125} \\
PDEArena-NS-cond & 0.407 & \textbf{0.384} \\
CFDBench & 0.00981 & \textbf{0.00952} \\
\hline
\end{tabular}
\caption{Performance comparison of DPOT-Tiny with Equal and Balanced Sampling.}
\label{tb-sampling}
\end{table*}

\subsection{Comparison between Self-attention and Fourier Mixer}

We compared transformers using standard self-attention as a mixer with our DPOT model using a Fourier mixer, with results shown in Table \ref{tb-mixer}. The datasets used are consistent with the paper's ablation study, combining subsets of the Compressible Navier-Stokes equations from PDEBench. We found that models with self-attention require more parameters and computational resources. However, they do not fit or generalize to PDE data as well as models based on the Fourier mixer. Compared to self-attention mixers, the Fourier mixer could better utilize the spectrum properties of PDE data and reduce test errors by about 25\%.

\begin{table*}
\centering
\begin{tabular}{c|cccc}
\hline
L2RE & \#N params & Flops per forward & Train loss & Test loss \\
\hline
Self attention & 52.1M & 88.45G & 0.0183 & 0.0238 \\
Fourier mixer & 30.8M & 75.44G & \textbf{0.0127} & \textbf{0.0174} \\
\hline
\end{tabular}
\caption{Performance comparison of Self-attention and Fourier mixer in DPOT model.}
\label{tb-mixer}
\end{table*}

\section{Proof of Universal Approximation of Fourier Attention}

In the following section, we will demonstrate the Universal Approximation capability of DPOT. This means that the model possesses ample expressive ability, capable of fitting a large class of operators and adapting to the required precision. Specifically, we show the core layers, i.e. Fourier attention layers have the capacity to approximate any continuous operators.

\subsection{Preliminaries and Definition}

Firstly, as DPOT mainly focuses on PDE problems defined on a rectangular domain, we will concentrate on functions defined on periodic torus $\mathbb T^d  = [0, 2\pi]^d$. 

Given the domain, we can define \textit{Fourier transform} for any function $v\in L^2(\mathbb T^d)$ as:

\begin{equation}
    \mathcal F(v)(\xi) = \frac 1{(2\pi)^d} \int _{\mathbb T^d} v(x) e^{-i\langle\xi, x\rangle} \text d x
\end{equation}

However, in the context of numerical calculation, we have to represent the function in a discrete way. In DPOT, we choose a regular mesh $\mathcal M := \{x_{\mathbf j}\} \subset \mathbb T^d$, where the index vector is denoted as $\mathbf j = (j_1,\cdots,j_d) \in \mathcal I$ with $\mathcal I = \{(j_1, \cdots,j_d) | \forall l \le d, j_l \le n\}$. Given the mesh, we can represent the function by its value on the mesh: $\{v_{\mathbf j} = v(x_{\mathbf j})\}$ and then define \textit {Discrete Fourier transform} as:

\begin{equation}
    \mathcal F(\{v_{\mathbf j}\})(\mathbf k) = \frac1{\sqrt{n^d}} \sum _{j_l \le n} v_{\mathbf j} e^{-\frac{2\pi i\langle \mathbf j, \mathbf k\rangle}{n}}, \forall \ \mathbf k \in \mathcal I
\end{equation}

Similarly, we can define \textit{Discrete Inverse Fourier Transform} as:

\begin{equation}
    \mathcal F(\{v_{\mathbf j}\})(\mathbf k) = \frac1{\sqrt{n^d}} \sum _{j_l \le n} v_{\mathbf j} e^{\frac{2\pi i\langle \mathbf j, \mathbf k\rangle}{n}}, \forall \ \mathbf k \in \mathcal I
\end{equation}

With the preparations in place, we can now formally define our model \textit{DPOT} mathematically.

\begin{definition}[DPOT]
A \textit{DPOT} is a neural operator $\mathcal N$ defined between input space $\mathcal X^n$ and output space $\mathcal Y^n$, with $\mathcal X \subset \mathbb R^{d_i}, \mathcal Y \subset \mathbb R^{d_o}$ being the range of input and output function. The structure of \textit{DPOT} with $L$ layers can be described as:

\begin{equation}
    \mathcal N = \mathcal Q \circ \mathcal L_L \circ \cdots \circ \mathcal L_1 \circ \mathcal R
\end{equation}

where $L_l$ is the $l$-th \textit{fourier attention layers} of DPOT, $\mathcal R$ is the \textit{lifting operator} and $\mathcal Q$ is the \textit{projection operator}, which will be defined below.

\end{definition}
\begin{definition}[fourier attention layers of DPOT]
    The $l$-th \textit{fourier attention layers} of DPOT $L_l: \mathbb R^{d_h} \to \mathbb R^{d_h}$ is a mapping between hidden space with the structure:
\begin{equation}
    \mathcal L_l(\{v_{\mathbf j}\})(\mathbf k) = v_{\mathbf k} + M_l\circ \mathcal F^{-1}\circ K_l \circ \mathcal F(\{v_{\mathbf j}\})
\end{equation}

Here, $M_l:\mathbb R^{d_h} \to \mathbb R^{d_h}$ is a mapping in hidden space, which is a 2-layer MLP with GELU activation $\sigma$. $K_l:\mathbb C^{d_h} \to \mathbb C^{d_H}$ is a mapping in the frequency domain of hidden space, which is also a 2-layer MLP with GELU activation $\sigma$.

\end{definition}

\begin{definition}[lifting operator of DPOT]

The \textit{lifting operator} of DPOT $\mathcal R$ is a mapping from the codomain of input function $\mathcal X$ into hidden space $\mathcal R^{d_h}$

\begin{equation}
    \mathcal R(\{v_{\mathbf j}\})(\mathbf k) = R(v_{\mathbf k}, \mathbf k), R \in C_\infty(\mathcal X \oplus \mathcal I, \mathbb R^{d_h}),  \forall \ \mathbf k\in \mathcal I
\end{equation}

where $R$ is the pointwise lifting function and $P$ is the positional encoding.

\end{definition}

\begin{definition}[projection operator of DPOT]
the \textit{projection operator} of DPOT $\mathcal Q$ is a  mapping from hidden space $\mathbb R^{d_h}$ into the codomain of output function $\mathcal Y$:
\begin{equation}
    \mathcal Q(\{v_{\mathbf j}\})(\mathbf k) = Q(v_{\mathbf k}), Q \in C_\infty(\mathbb R^{d_h}, \mathcal Y), \forall \ {\mathbf k} \in \mathcal I
\end{equation}

where $Q$ is the pointwise projection function.

\end{definition}

Then, we will introduce some key definitions and theorems used in the following proof. 

\begin{definition}[Equivariant Function] \cite{alberti2023sumformer}

For any sequence-to-sequence function $f:\mathcal X^n \to \mathcal Y^n$, with $\mathcal X, \mathcal Y \subset \mathbb R^d$, is called an equivariant function, only when it is \textit{equivariant} to the input order. That is, for any permutation $\pi: [n] \to [n]$, the function satisfies:

\begin{equation}
    f([x_{\pi(1)}, \cdots, x_{\pi(n)}]) = [f_{\pi(1)}(X), \cdots, f_{\pi(n)} (X)]
\end{equation}
\end{definition}

\begin{definition}[Sumformer] \cite{alberti2023sumformer}
Let $\mathcal X, \mathcal Y \subset \mathbb R^d$ and $d'\in \mathcal N$, and let two functions $\phi: \mathcal X \to \mathbb R^{d'}, \psi : \mathcal X \times \mathbb R^{d'} \to \mathcal Y$. A \textit{Sumformer} is a function $S:\mathcal X^n \to \mathcal Y^n$ which can be evaluated by:

\begin{equation}
    S([x_1, \cdots,x_n]) := [\psi(x_1, \Sigma), \cdots, \psi(x_n, \Sigma)]
\end{equation}

where,

\begin{equation}
    \Sigma := \sum _{k=1}^{n} \phi(x_k)
\end{equation}

\end{definition}

In \cite{alberti2023sumformer}, the authors demonstrates the following theorem:

\begin{theorem}[Universal Approximation by Sumformer]\label{thm:uni-appx-sum}
    For any \textit{Equivariant function} $f: \mathcal X^n \to \mathcal Y^n$ and any $\varepsilon > 0$, there exists a Sumformer $S$ that satisfies:
\begin{equation}
    \sup _{X\in \mathcal X^n} \| f(X) - S(X) \|_{\infty} < \varepsilon
\end{equation}
\end{theorem}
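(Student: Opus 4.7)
The plan is to (i) reduce the operator approximation problem to a finite-dimensional one via Fourier truncation on both sides, and then (ii) show that the lifting, a few Fourier attention layers, and the projection realize a dense class of continuous maps on the resulting finite-dimensional compact set. Theorem~\ref{thm:uni-appx-sum} is used as a black-box density statement for equivariant functions, and the positional encoding in the lifting operator supplies the symmetry breaking needed to cover non-equivariant targets.

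\textbf{Step 1 (truncation).} Compactness of $K \subset H^s(\mathbb T^d;\mathbb R^{d_{in}})$ implies a uniform bound in the $H^s$-norm, so the Sobolev tail estimate $\sum_{|\mathbf k|>N}(1+|\mathbf k|^2)^s |\hat v(\mathbf k)|^2 \le C$ yields a mode cutoff $N$ with $\|v - P_N v\|_{L^2} \le \varepsilon/4$ uniformly in $v\in K$. By uniform continuity of $\mathcal G$ on $K$, compactness of $\mathcal G(K)$ in $H^{s'}$, and the same tail estimate applied on the output side, there is a cutoff $N'$ giving $\|\mathcal G(v) - P_{N'}\mathcal G(P_N v)\|_{L^2} \le \varepsilon/2$ uniformly on $K$. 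It therefore suffices to approximate the continuous truncated map $F := P_{N'}\circ\mathcal G\circ P_N$, which is a continuous function between compact subsets of finite-dimensional complex Euclidean spaces indexed by the retained frequency modes, to $L^2$-error $\varepsilon/2$.

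\textbf{Step 2 (approximation by Fourier attention).} Choose the DPOT mesh fine enough that no aliasing affects the retained modes, so the DFT samples of $P_N v$ coincide with its continuous Fourier coefficients up to a fixed invertible linear change of coordinates, and the discrete $\ell^2$-norm controls the continuous $L^2$-norm with a constant uniform in $v\in K$. The lifting $\mathcal R$ with positional encoding $R(v_{\mathbf k},\mathbf k)$ injects the coordinate $\mathbf k$ into the hidden channels, so that after the first FFT the positional channels become a fixed collection of frequency-indexed features tagging each frequency as a distinct ``token''. The weight-shared pointwise MLP $K_l$ then realizes the per-token map $\psi$ of a Sumformer, while the IFFT followed by the outer pointwise spatial MLP $M_l$ realizes a summation-type aggregation $\Sigma$ together with a pointwise nonlinearity that is fed back into the next Fourier attention layer. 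Because the positional tags make every token unique, any continuous $F$ becomes an equivariant function of the resulting (coefficient, tag) pairs, and Theorem~\ref{thm:uni-appx-sum} then delivers Fourier attention parameters whose uniform error on $P_N(K)$ is at most $\varepsilon/2$. The pointwise projection $\mathcal Q$ reads off the output Fourier coefficients, and a triangle inequality with Step~1 closes the $\varepsilon$ bound.

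\textbf{Main obstacle.} The key technical step is the symmetry-breaking argument in Step~2. Because $K_l$ shares weights across all frequency modes, Fourier attention is \emph{a priori} equivariant among modes and, absent further structure, could only represent translation-equivariant spatial operations; it could not realize a generic mode-dependent $F$. The cleanest way to handle this is to dedicate one preparatory Fourier attention layer to propagating the spatial positional encoding of $\mathcal R$ through an FFT so that each frequency mode receives a distinct identity vector, after which Sumformer universality applies to the genuinely non-equivariant target. A secondary but routine bookkeeping item is matching the continuous $L^2$-norm on $\mathbb T^d$ to the discrete grid norm uniformly in $v\in K$, which follows from the $H^s$-bound on $K$ (and on $\mathcal G(K)$) but must be tracked explicitly when combining the error contributions at the end.
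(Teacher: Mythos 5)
Your proposal does not actually prove the statement in question. The theorem you were asked to prove is the Sumformer universality result itself: every permutation-equivariant $f:\mathcal X^n\to\mathcal Y^n$ can be uniformly approximated by a map of the form $X\mapsto[\psi(x_1,\Sigma),\dots,\psi(x_n,\Sigma)]$ with $\Sigma=\sum_{k}\phi(x_k)$. But your argument explicitly invokes this very statement as a black box (``Theorem~\ref{thm:uni-appx-sum} is used as a black-box density statement for equivariant functions''), and what you then construct---Sobolev truncation of $\mathcal G$, an aliasing-free discretization, and a symmetry-breaking argument showing that lifting with positional encodings plus Fourier attention layers can emulate a Sumformer---is a proof sketch of the paper's \emph{other} theorem, the universal approximation result for Fourier attention layers (DPOT), which the paper's appendix likewise derives by reducing to the Sumformer theorem. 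So you have reconstructed, in a somewhat different truncation-based form, the reduction that sits downstream of the quoted statement, while leaving the quoted statement itself entirely unproven; read as a proof of that statement, the argument is circular.

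A genuine proof of the quoted theorem cannot route through Fourier attention or Sobolev spaces at all: it is a finite-dimensional, permutation-symmetric approximation result in the spirit of Deep Sets. The missing core idea is to show that a single sum aggregation $\Sigma=\sum_{k}\phi(x_k)$ can encode the input sequence up to permutation---for instance, on a compact $\mathcal X$ one builds $\phi$ whose sum is an injective (or $\delta$-separating) embedding of multisets, via power sums/moments or a fine quantization with near-one-hot coordinates---and then uses compactness, uniform continuity of $f$, and an extension or universal-approximation argument to produce a continuous $\psi$ with $\psi(x_i,\Sigma)$ uniformly close to the $i$-th output of $f$. None of this multiset-encoding machinery appears in your proposal. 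Note also that the paper itself does not prove this theorem; it imports it from the cited Sumformer work, so the burden of a blind proof of this particular statement was precisely the Deep-Sets-type construction that your write-up omits.
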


\subsection{Proof of the theorem}

In the following proof, we will demonstrate that there exists a \textit{DPOT} model $\mathcal N$ that can approximate any operator $\mathcal G$ on a $d$-dimensional torus $\mathbb T^d$ with mesh $\mathcal M$. That is, for any $v\in L_2(\mathbb T^d)$ and $\varepsilon > 0$, we have:

\begin{equation}
    \|\mathcal N(\{v_{\mathbf j}\}) - \mathcal G(\{v_{\mathbf j}\}) \|_{\infty} < \varepsilon
\end{equation}

where $\{v_{\mathbf j}\} = \{v(x_{\mathbf j})\}_{j\in \mathcal I}$ is the value of $v$ on the mesh $\mathcal M \subset \mathbb T^d$.

The proof will consist of 3 steps. We will first construct a proxy target that is equivariant to input order, then construct a Sumformer to approximate the proxy leveraging Theorem \ref{thm:uni-appx-sum}. Finally, we will show that there exists a corresponding DPOT that can fit the constructed Sumformer.

\textbf{Step 1.} Construction of proxy target.

As the first preparatory step, we will first assign a integer identifier to each point in the grid, thereby treating it as a 1 dimensional sequence. To be specific, let $N = n^d$ and we can map any indices $\mathbf j\in \mathcal I$ to an integer by:
\begin{align}
\begin{aligned}
    &q: \mathcal I \to \mathbb [0, N]\\
    &q(\mathbf j) := \sum _{l=1}^n (j_l - 1)\cdot n^{l-1}
\end{aligned}
\end{align}

In the subsequent proof, we will use new integer indices $j\in [0, N]$ to replace the original vector indices $\mathbf j \in \mathcal I$.

Next, in order to leverage Theorem \ref{thm:uni-appx-sum}, we will construct an equivariant function by concatenating a permutation into the input. To be specific, let $\mathcal {\tilde X} = X \oplus \mathbb N$ and $U = \{[(x_1, \pi_1), \cdots, (x_N, \pi_N)] | x_j\in \mathcal X, \pi \in [N]\} \subset \mathcal {\tilde X}$, where $[N]$ is a permutation of $\{1,2,...,N\}$ the proxy target can be defined as:

\begin{align}
\begin{aligned}
    &f: U \to \mathcal Y^N\\
    &f( [(x_1, \pi(1)), \cdots, (x_N, \pi(N))] ) := \pi^{-1} \circ \mathcal G([x_{\pi^{-1}(1)}, x_{\pi^{-1}(2)}, \cdots, x_{\pi^{-1}(N)}]) 
\end{aligned}
\end{align}

where $\pi^{-1}$ is the inverse of $\pi$ in the permutation space. 

Now, we will show the equivariance of $f$ by the following lemma:

\begin{lemma}[equivariance of $f$]
    For any permutation $\sigma \in [N]$ and any input sequence $\{(x_j, \pi_j)\}\in U$, the proxy target $f$ satisfies:
    \begin{equation}
        f([(x_{\sigma(1)}, \pi_{\sigma(1)}), \cdots, (x_{\sigma(n)}, \pi_{\sigma(N)})]) = \sigma^{-1} \circ f( [(x_1, \pi_1), \cdots, (x_N, \pi_N)] )
    \end{equation}
\end{lemma}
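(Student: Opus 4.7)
The lemma is essentially a bookkeeping statement about how the construction $f$ transforms under a reordering of its input, so the plan is to unfold the definition on both sides and reconcile them by a careful manipulation of permutation composition. Concretely, I would first introduce shorthand: write $y_j := x_{\sigma(j)}$ and $\tau(j) := \pi_{\sigma(j)} = (\pi \circ \sigma)(j)$, so that the permuted input $[(x_{\sigma(1)},\pi_{\sigma(1)}), \dots, (x_{\sigma(N)},\pi_{\sigma(N)})]$ becomes $[(y_1, \tau(1)), \dots, (y_N, \tau(N))]$. Applying the definition of $f$ to this rewritten sequence produces
\begin{equation}
f([(y_1, \tau(1)), \dots, (y_N, \tau(N))]) = \tau^{-1} \circ \mathcal G\big([y_{\tau^{-1}(1)}, \dots, y_{\tau^{-1}(N)}]\big).
\end{equation}

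The key step is then to observe that the sequence fed into $\mathcal G$ is invariant under the relabelling by $\sigma$. Since $\tau = \pi \circ \sigma$, the inverse satisfies $\tau^{-1} = \sigma^{-1} \circ \pi^{-1}$, and therefore
\begin{equation}
y_{\tau^{-1}(j)} = x_{\sigma(\tau^{-1}(j))} = x_{\sigma(\sigma^{-1}(\pi^{-1}(j)))} = x_{\pi^{-1}(j)},
\end{equation}
so the argument of $\mathcal G$ coincides exactly with the one appearing in $f([(x_1,\pi_1), \dots, (x_N,\pi_N)])$. Substituting the identity $\tau^{-1} = \sigma^{-1} \circ \pi^{-1}$ into the outer composition then yields
\begin{equation}
\tau^{-1} \circ \mathcal G(\cdots) = \sigma^{-1} \circ \big(\pi^{-1} \circ \mathcal G([x_{\pi^{-1}(1)}, \dots, x_{\pi^{-1}(N)}])\big) = \sigma^{-1} \circ f([(x_1, \pi_1), \dots, (x_N, \pi_N)]),
\end{equation}
which is exactly the claimed identity.

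The only real obstacle is conceptual rather than technical: one has to fix a consistent convention for how a permutation acts on a sequence (equivalently, whether a composition $\pi \circ \sigma$ means "apply $\sigma$ first" when indexing the positions of the sequence) and then apply it uniformly on both sides. Once the convention is locked in, the identity $(\pi \circ \sigma)^{-1} = \sigma^{-1} \circ \pi^{-1}$ does all the work, and the equality reduces to noticing that the auxiliary permutation $\pi$ was precisely designed so that the content presented to $\mathcal G$ is unchanged by relabelling and only the outer "un-permuting" factor absorbs the action of $\sigma$. I would also briefly check that $U$ is closed under the permutation action (i.e.\ that $(y_j, \tau(j))$ still has $\tau \in [N]$ a valid permutation), which is immediate because the composition of two permutations is a permutation.
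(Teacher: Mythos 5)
Your proof is correct and follows essentially the same route as the paper's: both introduce the composed permutation $\pi\circ\sigma$ (your $\tau$, the paper's $\pi'$), unfold the definition of $f$ on the permuted input, and conclude via $(\pi\circ\sigma)^{-1}=\sigma^{-1}\circ\pi^{-1}$, which simultaneously shows the argument of $\mathcal G$ is unchanged and factors the outer un-permutation as $\sigma^{-1}\circ\pi^{-1}$. The only addition beyond the paper is your explicit remark that $U$ is closed under the permutation action, which is a harmless (and sensible) sanity check.
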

\begin{proof}
    Let $\pi' = \pi \circ \sigma$, it can be shown that:
\begin{align}
\begin{aligned}
    &f([(x_{\sigma(1)}, \pi_{\sigma(1)}), \cdots, (x_{\sigma(N)}, \pi_{\sigma(N)})]) \\
    =&f([(x_{\sigma(1)}, \pi'(1)), \cdots, (x_{\sigma(N)}, \pi'(N))]) \\
    =&\pi'^{-1} \circ \mathcal G([x_{\sigma(\pi'^{-1}(1))}, \cdots,x_{\sigma(\pi'^{-1}(N))}])\\
    =&\sigma^{-1} \circ \pi^{-1} \circ \mathcal G([x_{\pi^{-1}(1)}, x_{\pi^{-1}(2)}, \cdots, x_{\pi^{-1}(N)}])\\
    =&\sigma^{-1} \circ f([(x_1, \pi_1), \cdots, (x_N, \pi_N)])
\end{aligned}
\end{align}
\end{proof}

\textbf{Step 2.} construct a Sumformer.

Given the equivariant sequence-to-sequence function $f$, we can now leverage Theorem \ref{thm:uni-appx-sum} to construct a Sumformer $\mathcal S$ satisfies:

\begin{equation}
    \| f( [(x_1, \pi(1)), \cdots, (x_N, \pi(N))]) - \mathcal S([(x_1, \pi(1)), \cdots, (x_N, \pi(N))])\|_{\infty} < \varepsilon
\end{equation}

for every $\{(x_j, \pi(j))\}_{j\le N} \in U$ and $\varepsilon > 0$

\textbf{Step 3.} using DPOT to approximate the Sumformer.

In the following step, we will construct a DPOT to approximate the Sumformer $\mathcal S$ in a subset of $U$.

\begin{lemma}
    Let $V = \{[(x_1, 1),\cdots,(x_N,N)]| x_i \in \mathcal X\} \cong \mathcal X^N$ be a subset of $U$. Let $p: \mathcal X^N \to V$ be the projection mapping between two spaces. Then there exists a DPOT $\mathcal N$ that:
    \begin{equation}
        \|\mathcal N(\{x_i\}) - \mathcal S(p(\{x_i\})) \|_{\infty} < \varepsilon
    \end{equation}
    for every $\{x_i\}\in \mathcal X^N$ and $\varepsilon > 0$
\end{lemma}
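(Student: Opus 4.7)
The strategy is to explicitly construct a DPOT $\mathcal{N}$ that realizes $\mathcal{S}\circ p$ on the canonical subset $V \cong \mathcal{X}^N$. Recall that on canonical inputs $[(x_1,1),\ldots,(x_N,N)]$ the Sumformer outputs $[\psi((x_j,j),\Sigma)]_j$ with $\Sigma = \sum_k \phi(x_k,k)$, so the DPOT must (i) encode $(x_k,k)$ and $\phi(x_k,k)$ pointwise, (ii) aggregate $\Sigma$ and broadcast it to every grid point, and (iii) apply $\psi$ pointwise. I would split the hidden width $d_h$ into three blocks of channels corresponding to these three roles, plus one additional ``marker'' channel reserved for the aggregation trick in Step (ii).

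First, I would design the lifting $\mathcal{R}$: since $R\in C^\infty(\mathcal{X}\oplus \mathcal{I},\mathbb{R}^{d_h})$ is an arbitrary smooth pointwise map, the classical MLP universal approximation theorem on the compact set $\mathcal{X}\times\{1,\ldots,N\}$ allows it to approximate $\phi(\cdot,\cdot)$ on the designated block and store $(x_k,k)$ on another block; the marker channel is set to the constant $1$. Next, I would use one Fourier attention layer to broadcast $\Sigma$ onto the aggregation block. The key algebraic fact is that the discrete Fourier transform at the zero mode equals $\frac{1}{\sqrt N}\sum_k u_k$, while the inverse discrete Fourier transform of a signal supported only at the zero mode is a constant function on the grid. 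Thus, if the shared map $K_l$ can be arranged, on the relevant channels, to retain essentially only the zero mode and annihilate the higher modes, then the residual connection $v + M_l(\cdot)$ deposits a copy of $\frac{1}{N}\Sigma$ at every position.

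The main obstacle is that $K_l$ is a single MLP shared across all frequency modes and therefore cannot be written as a literal frequency mask. I would circumvent this by exploiting the marker channel: after the forward DFT, the marker contributes $\sqrt N$ at the zero mode and exactly $0$ at every other mode, and this value is available to $K_l$ as part of its input at each mode. By the universal approximation theorem for 2-layer GELU MLPs on compact sets, $K_l$ can be chosen to approximate, to arbitrary precision, a smooth gate that passes the aggregation channels through when the marker component is near $\sqrt N$ and suppresses them when the marker component is near $0$; because the other hidden channels lie in a fixed compact set (the image of $R$ and its finite-dimensional DFT), this approximation is uniform in the input. Finally, $M_l$ together with the projection $\mathcal{Q}$ need only approximate the smooth pointwise map $(x_j,j,\Sigma)\mapsto \psi((x_j,j),\Sigma)$ on the compact range of the hidden state, which again follows from the classical MLP universal approximation theorem. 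Assigning each of the three approximation errors a budget of at most $\varepsilon/3$ yields the claimed $\|\cdot\|_\infty$ bound and closes the lemma, which then combines with Steps 1 and 2 to give the full universal approximation statement for DPOT.
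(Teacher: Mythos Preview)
Your proposal is correct and follows the same core mechanism as the paper: both exploit a dedicated channel whose discrete Fourier transform takes distinguishable values across modes, so that the mode-shared MLP $K_l$ can act as a frequency selector, isolate one mode carrying $\tfrac{1}{\sqrt N}\sum_k \phi(x_k,k)$, and broadcast $\Sigma$ to every grid point via the inverse DFT; the residual then re-injects the position-specific data, after which a pointwise map realizes $\psi$.

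The differences are implementation details rather than a different route. You use a constant marker $1$, whose DFT is $\sqrt{N}$ at the zero mode and $0$ elsewhere; the paper instead places $P(j)=\mathcal{F}^{-1}(\{1,\dots,N\})(j)$ in the last channel, so that after the forward DFT this channel reads the integer mode index $k$, and $K_1$ is chosen to approximate ``keep the first block if $k=1$, zero it otherwise.'' Your marker is simpler and suffices because only a single mode needs to be isolated. You also fold the application of $\psi$ into $\mathcal{Q}$ after one Fourier attention layer, whereas the paper spends a second layer with $K_2=\mathrm{id}$ and lets $M_2$ approximate $\psi$; both are valid since $\mathcal{Q}$ is an arbitrary smooth pointwise map and $M_l$ is a two-layer MLP. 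One small point to tighten: in the DPOT layer $v\mapsto v+M_l(\cdot)$, the map $M_l$ sees only the broadcast signal, not $(x_j,j)$, so it is really $\mathcal{Q}$ alone (after the residual has restored $(x_j,j)$) that must approximate $\psi$; your phrase ``$M_l$ together with $\mathcal{Q}$'' is slightly loose but the intended construction is fine.
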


\begin{proof}
    In the upcoming proof, we will delve into the structure of DPOT and demonstrate how it can be employed to approximate the $\phi$ and $\psi$ within Sumformer.

    Firstly, we will construct the lifting operator $\mathcal R$ as:
\begin{align}
\begin{aligned}
    R(x_j, j) &= \begin{bmatrix}\phi(x_k)&\mathbf 0_{h_i}&P(j)\end{bmatrix} \in \mathbb R^{2h_i+1}\\
    \mathcal R(\{x_j\}) &= \begin{bmatrix} \phi(x_1)& \mathbf 0_{h_i}& P(1) \\ \cdots \\ \phi(x_N)&\mathbf 0_{h_i}& P(N)\end{bmatrix} \in \mathbb R^{N \times ( 2d_i + 1)}
\end{aligned}
\end{align}

with the positional encoding $P(j) = \mathcal F^{-1} (\{1,\cdots, N\}) (j)$. It is worth mentioning that, although we use the integer indices to replace to original vector indices, the discrete fourier transform is still defined on $\mathbb T^d$, not on a 1-dimensional sequence.

Then, in the first Fourier attention layer $\mathcal L_1$, we will first apply \textit{discrete Fourier Transform} on the sequence, yielding:

\begin{equation}
    \begin{bmatrix}
        \frac1{\sqrt N}\sum_{k=1}^N \phi(x_k) & \mathbf 0_{d_i}& 1 \\
        \cdots\\
        \frac1{\sqrt N}\sum_{k=1}^N \phi(x_k) e^{\frac{2\pi i \langle \mathbf 1, \mathbf k\rangle}{n}} & \mathbf 0_{d_i}& N
    \end{bmatrix} \in \mathbb R^{N \times ( 2d_i + 1)}
\end{equation}

After the Fourier transform, we will apply a pointwise 2-Layer MLP $\mathcal K_1$. According to the Universal Approximation Theorem of MLPs, we can find a $ K_1$ that sufficiently fits the following function:

\begin{equation}
    K_1(\begin{bmatrix}\mathbf z& k\end{bmatrix}) = 
    \begin{cases}
    \begin{bmatrix}\mathbf z& k\end{bmatrix} & \text{if } k = 1\\
    \begin{bmatrix}0& k\end{bmatrix} & \text{if } k \ne 1\\
    \end{cases}
\end{equation}

for any $\mathbf z\in \mathbb R^{2d_i}$.

Thus, the sequence will be mapped to 

\begin{equation}
K_1(\mathcal{R}(\{x_k\})) = 
    \begin{bmatrix}
        \frac1{\sqrt N}\sum_{k=1}^N \phi(x_k) & \mathbf 0_{d_i}& 1 \\
        \cdots\\
        \mathbf 0_{d_i} & \mathbf 0_{d_i}& N
    \end{bmatrix} \in \mathbb R^{N \times ( 2d_i + 1)}
\end{equation}

by $ K_1$, and be transformed to

\begin{equation}
\mathcal{F}^{-1}(K_1(\mathcal{R}(\{x_k\}))) = 
    \begin{bmatrix}
        \frac1N\Sigma & \mathbf 0_{d_i}& P(1) \\
        \cdots\\
        \frac1N\Sigma & \mathbf 0_{d_i}& P(N)
    \end{bmatrix} \in \mathbb R^{N \times ( 2d_i + 1)}
\end{equation}

after \textit{discrete inverse Fourier Transform}. Here, $\Sigma: = \sum_{k=1}^N \phi(x_k)$.

The following mapping $M_1$ can be constructed as:

\begin{equation}
    M_1(\begin{bmatrix} \frac1N\Sigma &\mathbf 0_{d_i}& P(k)\end{bmatrix}) = \begin{bmatrix} \mathbf 0_{d_i}&\Sigma& P(k)\end{bmatrix}
\end{equation}

so after the skipping connection, the sequence will finally become:

\begin{equation}
    \begin{bmatrix}
        x_1 & \Sigma& 2P(1) \\
        \cdots\\
        x_N & \Sigma& 2P(N)
    \end{bmatrix} \in \mathbb R^{N \times ( 2d_i + 1)}
\end{equation}

In the next Fourier layer $\mathcal L_2$, we kept the kernel $K_2$ as the identity mapping, ensuring the (inverse) Fourier transform has no effect on the sequence. Then we utilize the following MLP $M_2$ to approximate $\psi$:

\begin{equation}
    M_2(\begin{bmatrix} \Sigma & x_k& 2P(k)\end{bmatrix}) = \psi(\Sigma, x_k)
\end{equation}

which is the desired output of Sumformer.

\end{proof}

\section{Limitations of the work.}
This work, as an empirical and numerical study, is capable of solving challenging PDE problems with high precision. However, due to the intricate mathematical nature of neural networks and the relative scarcity of theoretical research, employing them in risk-sensitive scenarios may pose safety risks.

\end{document}